\documentclass{article}

\PassOptionsToPackage{numbers,compress}{natbib}

\usepackage[preprint]{neurips_2026}

\usepackage[utf8]{inputenc}
\usepackage[T1]{fontenc}
\usepackage{amsmath,amssymb,amsfonts,amsthm,bm}
\usepackage{booktabs}
\usepackage{graphicx}
\usepackage{subcaption}
\usepackage{titletoc}
\usepackage{hyperref}
\usepackage{url}
\usepackage{microtype}
\usepackage{xcolor}
\usepackage[breakable]{tcolorbox}
\usepackage{listings}

\title{Exactness Matters for Physical Rule Enforcement}
\author{%
  Bum Jun Kim\thanks{Corresponding author.}\\
  The University of Tokyo, Japan\\
  \texttt{bumjun.kim@weblab.t.u-tokyo.ac.jp}
}
\date{}

\newcommand{\R}{\mathbb{R}}
\newcommand{\uu}{\bm{u}}
\newcommand{\yy}{\bm{y}}
\newcommand{\kk}{\bm{k}}
\newcommand{\divergence}{\nabla \cdot}
\newcommand{\proj}{\Pi_{\mathrm{div}}}
\newcommand{\aproxproj}{\widetilde{\Pi}_{\mathrm{bc}}}
\newcommand{\clip}{\operatorname{clip}}
\newcommand{\mseDivCell}[2]{\shortstack[c]{$#1$\\$#2$}}
\newtheoremstyle{boxedplain}
  {0pt}{0pt}{\itshape}{}{\bfseries}{.}{.5em}{}
\newtheoremstyle{boxeddefinition}
  {0pt}{0pt}{\normalfont}{}{\bfseries}{.}{.5em}{}
\newtheoremstyle{boxedremark}
  {0pt}{0pt}{\normalfont}{}{\itshape}{.}{.5em}{}
\theoremstyle{boxedplain}
\newtheorem{theorem}{Theorem}
\newtheorem{proposition}[theorem]{Proposition}

\newtheorem{corollary}[theorem]{Corollary}
\theoremstyle{boxeddefinition}

\theoremstyle{boxedremark}
\newtheorem{remark}{Remark}
\tcbset{
  formalgraybox/.style={
    breakable,
    colback=gray!10,
    colframe=gray!10,
    boxrule=0pt,
    arc=0pt,
    outer arc=0pt,
    boxsep=0pt,
    left=0pt,
    right=0pt,
    top=0pt,
    bottom=0pt,
    before skip=.7\baselineskip,
    after skip=.7\baselineskip,
  }
}
\tcolorboxenvironment{theorem}{formalgraybox}
\tcolorboxenvironment{proposition}{formalgraybox}
\tcolorboxenvironment{lemma}{formalgraybox}
\tcolorboxenvironment{corollary}{formalgraybox}
\tcolorboxenvironment{definition}{formalgraybox}
\tcolorboxenvironment{example}{formalgraybox}
\tcolorboxenvironment{remark}{formalgraybox}
\lstdefinestyle{pythoncode}{
  language=Python,
  basicstyle=\ttfamily\scriptsize,
  keywordstyle=\color{blue!60!black},
  commentstyle=\color{gray!70!black},
  stringstyle=\color{green!40!black},
  showstringspaces=false,
  breaklines=true,
  columns=fullflexible,
  keepspaces=true,
  frame=single,
  framerule=0.2pt,
  rulecolor=\color{gray!35},
  xleftmargin=1em,
  xrightmargin=1em,
  aboveskip=0.75\baselineskip,
  belowskip=0.75\baselineskip,
  captionpos=b
}

\begin{document}
\maketitle

\begin{abstract}
Autoregressive scientific forecasters often enforce physical or structural constraints by repairing each predicted state before feeding it back into the model. However, it remains unclear when stronger physical rule enforcement becomes reliable and when it becomes a source of distribution shift. We study this question through operator exactness, meaning whether the repair map is the identity on the target manifold and is aligned with the target geometry. We compare raw forecasting, post hoc repair, and in-loop repair across periodic incompressible Navier--Stokes, non-periodic CFDBench flows, and a hierarchical-forecasting support task. In the exact periodic regime, Fourier projection substantially improves rollout accuracy. On the NS-128 benchmark, a strong Raw-FNO has a final-step rollout MSE at horizon 100 of $(9.390 \pm 6.290)\times 10^{-5}$, and post hoc and in-loop projection reduce it to $(1.130 \pm 0.165)\times 10^{-6}$ and $(5.370 \pm 0.113)\times 10^{-7}$. However, once an exact projection is unavailable and only approximate boundary-preserving cleanup is available, the ordering changes. Across cavity, tube, dam, and cylinder flow, stronger Poisson-based cleanup can reduce divergence while worsening rollout error; target-distortion MSE predicts this harm far better than a linear-system residual. Controlled mismatch, screened cleanup, adaptive gating, and external-backbone checks show that the best approximate-regime operating point can be raw or near-identity. Hierarchical forecasting gives the same broader pattern. Exact forecast reconciliation is a stable baseline, whereas blended top-down repair, a validation-tuned interpolation toward historical-proportion top-down reconciliation, is dataset-dependent. Thus, constraint enforcement should be benchmarked by operator--data alignment before enforcement strength. Use in-loop projection when the operator is exact, and validate approximate cleanup strength using rollout metrics otherwise.
\end{abstract}

\section{Introduction}
Autoregressive scientific forecasting is often stabilized by a simple repair loop using a physical rule. A model first predicts the next state; a projection or cleanup operator then repairs that prediction so that it satisfies a known physical or structural rule; the repaired state is finally fed back into the next prediction step. In incompressible flow, as an example, the rule is the divergence-free constraint $\divergence \uu = 0$, and the repair loop is especially tempting because small compressible components can accumulate during rollout, corrupt transport, and distort long-horizon energy and vorticity statistics. This stabilization benefit is why learned-simulator papers increasingly report long-horizon stability and physical consistency in addition to one-step error \citep{kochkov2021machine,sanchezgonzalez2020learning,brandstetter2022message,stachenfeld2022learned,lippe2023pderefiner,koehler2024apebench}. Nevertheless, as we show in Section~\ref{sec:experiments}, overly aggressive enforcement can become a recurring source of distribution shift when the repair operator is only approximate.

We study when this repair loop is a reliable improvement. Our conclusion is that reliability depends on whether the repair operator leaves valid benchmark targets unchanged. If the operator is exact on the target manifold, then it leaves valid targets unchanged and removes only components that the benchmark regards as irrelevant. In this regime, projection is a strong and often mandatory baseline. For periodic Navier--Stokes, Fourier projection maps predictions back into the divergence-free subspace without moving divergence-free targets.

However, this practice can cause failure when the available constraint operator is only approximate. This approximate regime is not a corner case. Soft residual penalties, learned pressure or projection modules, and boundary-dependent fractional-step or projection schemes have all been used to impose incompressibility or related physical structures in practice \citep{raissi2019physics,tompson2017accelerating,li2021physics,jin2021nsfnets,brown2001accurate,guermond2006overview}. Non-periodic benchmarks with copied boundaries, complex domains, or discrete pressure cleanups may not admit the exact periodic projector used above. The key risk is that an alternative, such as a copied-boundary Poisson solve, may improve the visible constraint residual without preserving the data manifold. It can reduce measured divergence while still perturbing physically valid target states or shifting the autoregressive state distribution seen by the model. Solving this approximate cleanup more aggressively then makes the chosen operator more faithful, but it does not make the operator better aligned with the data. Stronger enforcement can therefore worsen rollout error even as the constraint residual improves. Figure~\ref{fig:operator-exactness} summarizes this exact-versus-approximate split, which is the main subject of our study. This split is not merely conceptual. It appears empirically in rollout evaluations and is supported by the corresponding exactness and distortion analyses in Section~\ref{sec:experiments} and Appendix~\ref{app:theory}.

\begin{figure}[t!]
\centering
\includegraphics[width=\textwidth]{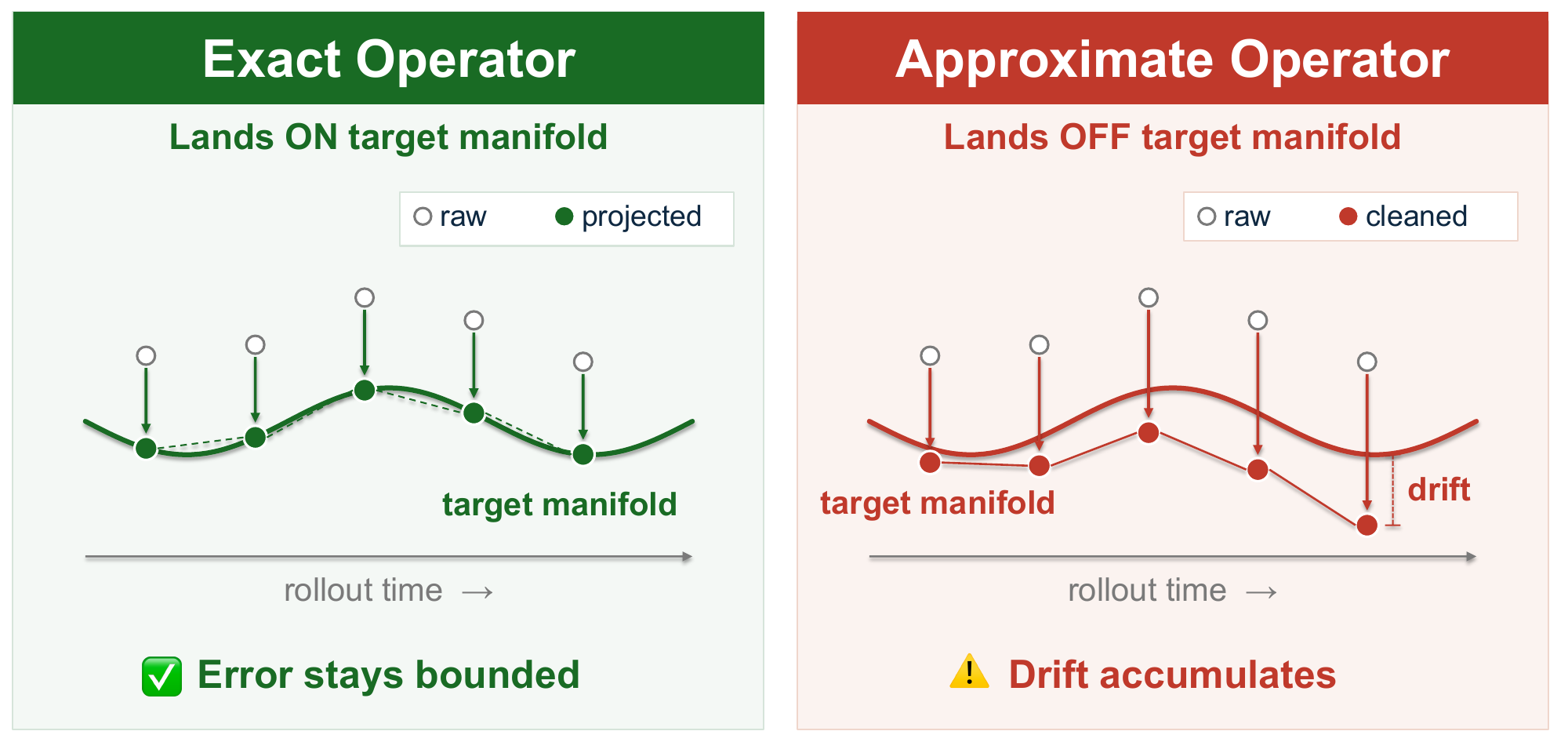}
\caption{Conceptual split between exact and approximate constraint operators in autoregressive rollout. The left side shows the exact case: projected states remain on the target manifold, allowing constraint correction to keep rollout error bounded. The right side shows the approximate case: the cleaned state can land off the target manifold, so repeated correction may accumulate drift even while it reduces the measured constraint violation.}
\label{fig:operator-exactness}
\end{figure}

On strong periodic velocity benchmarks, most of the gain comes not from a bespoke decoder but from applying this exact projection to each predicted state before feeding it into the next step. This exact-projection result changes the benchmarking standard. Any new constrained architecture must be compared not only to raw forecasting but also to post hoc and in-loop projection. Our main claim therefore concerns baseline quality under exact and approximate constraint operators, with architecture comparisons used to test whether the regime split persists across different backbones.

We then test, through a focused study using the official splits for the non-periodic cavity, tube, dam, and cylinder subsets of the CFDBench CFD benchmark introduced by \citet{luo2023cfdbench}, what survives once the exact periodic geometry is removed. Our CFDBench study includes stronger-solver negative controls, screened and geometry-aware cleanup families, long-horizon reevaluations of dam and cylinder flow, controlled mismatch sweeps, adaptive test-time cleanup, and a frozen U-Net transfer check. Appendix~\ref{app:hierarchical-support} extends the evidence with hierarchical-forecasting experiments on five public datasets. Together, these results define an empirical regime analysis of exact and approximate constraint operators, centered on the baseline rule rather than on a new forecasting architecture.

\paragraph{Contributions.}
We identify a regime split in constraint enforcement under strong autoregressive baselines. For periodic Navier--Stokes (NS-128), exact in-loop projection is the dominant intervention, and post hoc projection is a mandatory comparator; on non-periodic CFDBench, this ordering breaks once cleanup is only approximate. Mechanism diagnostics, including stronger-solver controls, target-distortion measurements, screened sweeps, and controlled mismatch experiments, show that approximate-regime failures are driven by operator mismatch rather than by the Poisson residual. Appendix~\ref{app:hierarchical-support} gives a compact non-CFD check in which exact reconciliation is the lowest-variance baseline across five public hierarchies, while blended top-down repair, a validation-tuned partial interpolation toward historical-proportion top-down reconciliation, can help or hurt. These results yield a practical guideline. Use in-loop projection when the operator is exact and orthogonal; otherwise, benchmark exact-on-target operators when available, and validate approximate cleanup strength using held-out rollout metrics. Our evidence is strongest for two-dimensional (2D) incompressible forecasting, with the hierarchical study serving as supporting structured-prediction evidence.

\section{Related Work}
\paragraph{Operator learning and neural simulators.}
The Fourier neural operator (FNO) of \citet{li2021fourier} remains a standard backbone for periodic partial differential equation (PDE) forecasting, while the DeepONet line of \citet{lu2021deeponet} provides a second canonical operator-learning family. U-Net-style encoder-decoder models remain competitive in scientific surrogate modeling because of their strong locality bias, and convolutional-network fluid surrogates remain important baselines in learned fluid simulation \citep{ronneberger2015u,kim2019deepfluids}. Learned simulators based on message passing and graph networks emphasize geometry and conservation structure \citep{sanchezgonzalez2020learning,pfaff2020learning,brandstetter2022message}, while recent rollout-focused PDE work makes long-horizon stability itself a first-class evaluation target \citep{lippe2023pderefiner}. Our goal here is not to propose yet another backbone family but to identify the strongest constraint-centric intervention that all of them should be benchmarked against.
Hybrid neural and solver surrogates and physics-constrained autoregressive PDE models also increasingly couple learned predictors to explicit numerical structure rather than relying on fitting raw data alone \citep{tompson2017accelerating,schenck2018spnets,deavilabelbuteperes2020combining,geneva2020physicsconstrained}.

\paragraph{Physics constraints for incompressible dynamics.}
Soft penalties, residual losses, stream-function parameterizations, projection layers, and explicit test-time constraint solvers are all common ways to encode incompressibility or related physical structure \citep{raissi2019physics,tompson2017accelerating,li2021physics,rubanova2022constraint,jin2021nsfnets}. The problem is less the existence of these techniques than the standard for evaluating them. If post hoc projection is omitted or if a new constrained decoder is only compared to raw forecasting, then empirical conclusions are difficult to interpret. Our adaptive-cleanup and risk--coverage analyses also connect this literature to a broader structured-prediction issue: constraint enforcement must be judged by both feasibility and test-time decision making \citep{geifman2017selective}.

\paragraph{Constraint repair beyond CFD.}
Forecast reconciliation in hierarchical time series provides a clean non-CFD instance of exact and approximate repair. Coherent forecasts lie in the column space of a known summing matrix, yielding exact reconciliation operators such as ordinary least squares (OLS) and minimum-trace reconciliation together with exact-but-nonorthogonal bottom-up aggregation, while top-down operators are generally mismatched to the target manifold \citep{hyndman2011optimal,wickramasuriya2019optimal,panagiotelis2021geometric,athanasopoulos2024review}. Classical top-down and bottom-up heuristics remain important comparators in that literature \citep{gross1990disaggregation,athanasopoulos2009hierarchical}. Recent probabilistic, quantile, temporal, and softly coherent variants extend the same coherence issue into modern machine-learning forecasting pipelines \citep{taieb2017coherent,rangapuram2021end,han2021simultaneously,rangapuram2023temporal,kamarthi2023rigidity,tsiourvas2024learning}. This literature is useful here because it isolates constraint-operator exactness outside fluid dynamics.

\paragraph{Benchmarks beyond periodic NS.}
Most projection-centric empirical claims are easiest to make on periodic 2D velocity benchmarks. CFDBench, introduced by \citet{luo2023cfdbench}, provides a stronger complementary stress test with varied boundary conditions, physical properties, and domain geometries across classic CFD problems. Recent benchmark work on autoregressive PDE emulators likewise emphasizes rollout-aware evaluation rather than one-step accuracy alone \citep{koehler2024apebench}. We use CFDBench's cavity-flow, tube-flow, dam-break, and cylinder-flow subsets to test what remains true once exact Fourier-space projection is no longer available.

\section{Method}
\subsection{Forecasting Setup}
Given an input history
\begin{equation}
\uu_{t-T_{\mathrm{in}}+1:t} \in \R^{T_{\mathrm{in}}\times 2 \times H \times W},
\end{equation}
a backbone $b_\theta$ predicts residual updates
\begin{equation}
\Delta \tilde{\uu}_{t+1:t+K} = b_\theta(\uu_{t-T_{\mathrm{in}}+1:t}),
\end{equation}
and the state evolves through
\begin{align}
\tilde{\uu}_{t+s} &= \hat{\uu}_{t+s-1} + \alpha \Delta \tilde{\uu}_{t+s},\\
\hat{\uu}_{t+s} &= \mathcal{T}(\tilde{\uu}_{t+s}), \text{ for } s=1,\dots,K,
\end{align}
where $\alpha \in (0,1)$ is a learned residual scale and $\mathcal{T}$ is either the identity, an exact divergence-free projector, or an approximate boundary-aware cleanup operator.

\paragraph{Terminology.}
Throughout the paper, exactness is an operator--data property, not a statement about how accurately an internal numerical system is solved. A repair operator $T$ is exact for a benchmark target set $\mathcal{M}$ when $T(y)=y$ for every valid target $y\in\mathcal{M}$; approximate cleanup means that this identity can fail on benchmark-valid targets, even if the cleanup maps many inputs closer to the constraint set. Under this terminology, the Direct CFDBench variant is an exact solve of its boundary-aware Poisson system, but it remains an approximate cleanup operator when that boundary-aware system is not exact-on-target for the benchmark velocity fields.

\subsection{Exact Periodic Projection}
For periodic 2D velocity fields $\uu=(u,v)$, we implement the exact projector in Fourier space. The Fourier-space implementation is the periodic spectral version of the classical projection viewpoint for incompressible flow \citep{chorin1968numerical,temam1969approximation}. Let $\bar{\uu}$ denote the spatial mean of $\uu$, and write $\kk=(k_x,k_y)$ for the two-dimensional Fourier wavevector. Define the Fourier-space vorticity by
\begin{equation}
\hat{\omega}(\kk) = i k_x \hat{v}(\kk) - i k_y \hat{u}(\kk),
\end{equation}
and define the stream function by
\begin{equation}
\hat{\psi}(\kk) = \frac{\hat{\omega}(\kk)}{\|\kk\|_2^2}\quad\text{for }\kk \neq 0,\qquad \hat{\psi}(0)=0.
\end{equation}
The projected velocity is then
\begin{equation}
\proj(\uu) = \bar{\uu} + (\partial_y \psi, -\partial_x \psi),
\end{equation}
which preserves the zero Fourier mode and satisfies $\divergence \proj(\uu)=0$ up to numerical precision.

This exact periodic projector is orthogonal with respect to the $L_2$ geometry of the periodic benchmark. If $\yy=\proj(\yy)$, then, for any raw prediction $\tilde{\uu}$,
\begin{equation}
\|\proj(\tilde{\uu})-\yy\|_2^2
= \|\tilde{\uu}-\yy\|_2^2 - \|(I-\proj)\tilde{\uu}\|_2^2
\le \|\tilde{\uu}-\yy\|_2^2.
\label{eq:orthogonal_proj}
\end{equation}
Equation~\ref{eq:orthogonal_proj} explains why post hoc projection is already a strong baseline under the exact periodic projector. Any difference between projecting both the target and the autoregressive state and projecting only the autoregressive state is therefore a separate training-dynamics issue rather than a consequence of the orthogonal-projection identity itself, and in our exact periodic experiments, it is empirically only a second-order effect.
In Appendix~\ref{app:theory}, Proposition~\ref{prop:orthogonal_projection_identity} proves the general version of Equation~\ref{eq:orthogonal_proj}. Projection onto a subspace containing the target cannot increase the squared error.

\subsection{Boundary-Preserving Approximate Cleanup}
However, for non-periodic cavity, tube, dam, and cylinder flow, exact spectral projection is unavailable. We therefore use training-matched boundary-preserving approximate cleanup operators, denoted $\aproxproj$, which are boundary-constrained Chorin--Temam-style cleanup maps rather than exact Leray projectors. These maps copy boundary velocities and update only interior velocities using a Poisson or screened Poisson pressure solve, so copying raw boundary values can break both exact divergence annihilation and idempotence on the full velocity field. We study a boundary-aware Poisson--Hodge family, including Jacobi, MG, SOR, CG, and Direct variants, and a screened family that weakens the pressure response. Appendix~\ref{app:boundary-cleanup-details} gives the full discrete stencil, solver definitions, and screened-family protocol.

\paragraph{Approximate-regime bias decomposition.}
Unlike $\proj$, the boundary-preserving approximate cleanup operator $\aproxproj$ is not orthogonal. Equation~\ref{eq:orthogonal_proj} does not apply, and neither post hoc nor in-loop cleanup is guaranteed to improve one-step MSE. A useful decomposition for any approximate cleanup operator $\widetilde{\Pi}$ and target $y$ is
\begin{equation}
\|\widetilde{\Pi}(x)-y\|_2^2
=
\|\widetilde{\Pi}(x)-\widetilde{\Pi}(y)\|_2^2
+ \|\widetilde{\Pi}(y)-y\|_2^2
+ 2\langle \widetilde{\Pi}(x)-\widetilde{\Pi}(y), \widetilde{\Pi}(y)-y\rangle.
\label{eq:approx_decomp}
\end{equation}
The second term is the target-distortion term, while the third cross-term can have either sign. Writing the in-loop dynamics as $x_{t+1}=\widetilde{\Pi}(F(x_t))$, with a target trajectory $(y_t)$ and error $e_t=x_t-y_t$, one generic perturbation bound is
\begin{equation}
\|e_{t+1}\|
\le
L\|e_t\|
+ \|\widetilde{\Pi}(F(y_t)) - F(y_t)\|
+ \delta_t,
\label{eq:approx_rollout_bias}
\end{equation}
where $L$ is a local Lipschitz factor of $\widetilde{\Pi}\circ F$ and $\delta_t=\|F(y_t)-y_{t+1}\|$ is the model's intrinsic one-step prediction error on the target trajectory. If $F(y_t)$ stays near the target manifold, the middle term is controlled by the local operator bias near that manifold. Equation~\ref{eq:approx_rollout_bias} is deliberately simple, but it captures our empirical observation. Lowering the linear-system residual only means that the chosen operator is solved more faithfully; it does not guarantee that the operator itself is aligned with the benchmark manifold. This operator-alignment gap is exactly why the non-periodic CFDBench benchmarks are scientifically useful. They test what remains true once the clean periodic geometry is removed. For the non-periodic experiments, we apply all cleanup operators in physical velocity units, and only then map them back into the normalized model space. Conceptually, physical-space cleanup is the cleaner choice because CFDBench tube, dam, and cylinder have nonzero mean flow, so a cleanup operator defined in normalized coordinates changes both the copied boundary values and the effective correction magnitude. Empirically, however, a dedicated negative-control ablation shows that under our canonical velocity-component normalization, the frame choice changes final-horizon rollout MSE by at most $0.14\%$ relative. We therefore use physical-space cleanup as protocol hygiene rather than as a hidden source of gains.

\paragraph{Theory appendix.}
Appendix~\ref{app:theory} formalizes the mechanism separating exact and approximate operators used throughout the experiments, including exact-on-target repair, solver residual, trajectorywise target distortion, and controlled mismatch.

\subsection{Forecasting Variants}
We compare raw forecasting, post hoc repair, in-loop repair, constraint-aware projection, soft-divergence training, and adaptive blended cleanup as experimental variants. Appendix~\ref{app:forecasting-variant-details} gives the full variant definitions, objective, operator list, and matched-budget protocol.

\subsection{Divergence-Gated Adaptive Cleanup}
Approximate projection helps because it removes compressible components, but it hurts because it distorts states away from the training manifold. This trade-off suggests that the right action outside the exact regime is often neither to skip cleanup entirely nor to apply it at full strength. We therefore evaluate an adaptive blended cleanup that uses predicted root-mean-square (RMS) divergence as a risk score:
\begin{equation}
d(\tilde{\uu}) = \|\divergence_h \tilde{\uu}\|_{\mathrm{rms}},\quad \gamma(\tilde{\uu}) = \min\{(d(\tilde{\uu})/\tau)^q, 1\},\quad \hat{\uu}_{\mathrm{ada}} = \tilde{\uu} + \gamma(\tilde{\uu})(\mathcal{P}(\tilde{\uu}) - \tilde{\uu}),
\end{equation}
where $\tau>0$ and $q>0$. When $d(\tilde{\uu})$ is small, the model stays close to the raw forecast; when it is large, the update approaches full post hoc cleanup. We instantiate $\mathcal{P}$ with the Jacobi family and with the screened family for the main deployment study, use Direct only as a stronger-operator stress test, and report additional GeoScreened deployment variants in Appendix~\ref{app:approx-extensions}. We tune the threshold $\tau$ and exponent $q$ once per dataset on the validation split, sweep those parameters to assess sensitivity, and then evaluate the selected rule on the held-out test trajectories. The method requires no retraining and directly operationalizes the safety motivation for using divergence as a warning signal.

\section{Experiments}
\label{sec:experiments}
\paragraph{Datasets.}
We use three main experiment groups in the main text.
The exact periodic main track uses periodic NS-128 with a matched FNO family trained for 18 epochs and evaluated out to 200 rollout steps. The approximate official CFDBench tracks cover cavity, tube, dam, and cylinder under the official splits with matched raw and projected FNO recipes and 20-step rollout evaluation, and dam together with cylinder additionally include 60-step long-horizon reevaluations. Additional approximate-regime experiments add stronger-solver negative controls, adaptive cleanup, controlled mismatch sweeps, the validation-to-deployment protocol below, and the appendix-only geometry-aware, projection-frame, and external-backbone studies collected in Appendix~\ref{app:approx-extensions}. Appendix~\ref{app:hierarchical-support} adds a real-data hierarchical forecasting support track using public Traffic, Labour, TourismLarge, OldTourismLarge, and Wiki2 hierarchies from the HierarchicalForecast benchmark collection introduced by \citet{olivares2022hierarchicalforecast}. That track uses long-horizon autoregressive gated recurrent unit (GRU) models following \citet{cho2014gru}, exact OLS and bottom-up reconciliation, and a validation-tuned blended top-down family.

\paragraph{Metrics.}
We report one-step test MSE, rollout MSE averaged over the horizon, rollout MSE at the final horizon, rollout divergence (Div), stable-rollout length, blow-up rate, and latency where relevant. Unless otherwise noted, numerical entries are reported as mean $\pm$ standard deviation. We denote final-horizon quantities by $\mathrm{MSE}@T_{\mathrm{eval}}$ and $\mathrm{Div}@T_{\mathrm{eval}}$, reserving $H$ and $W$ for spatial grid height and width; concrete tables use numeric horizons such as MSE@20, MSE@60, and MSE@100. This rollout-aware reporting follows the recent shift in neural PDE evaluation toward explicit long-horizon stress tests and autoregressive benchmark protocols \citep{lippe2023pderefiner,koehler2024apebench}. Appendix~\ref{app:hierarchical-support} additionally reports coherence RMS and target-distortion MSE for the hierarchical forecasting support track. Mechanism analysis adds target-distortion diagnostics, divergence-ranked risk--coverage curves, distortion and rollout correlations, controlled mismatch sweeps over cleanup strength, and the appendix-only geometry-aware and projection-frame analyses in Appendix~\ref{app:approx-extensions}.

\paragraph{Recipe.}
Appendix~\ref{app:reproducibility} gives the full reproducibility recipe, covering optimizer, hardware, launch, artifact management, compute, latency, validation model selection, and random-initialization details, including the initialization counts and subset conventions. It also includes the cost comparison used to rule out hidden capacity or training-cost explanations for the main exact-regime ordering and the tuned cylinder cleanup comparison so that the body can focus on dataset scope, the metric definitions above, and fairness-critical comparisons.

\subsection{Main Evidence Across Regimes}
The clearest evidence comes from the exact periodic regime. Table~\ref{tab:main128} shows that Raw FNO is already strong on NS-128, with test MSE $(5.810 \pm 3.600)\times 10^{-8}$, so projection is not beating a weak baseline. Yet exact cleanup is still decisive, matching Theorem~\ref{thm:exact_on_target}. Post hoc projection reduces rollout MSE@100 from $(9.390 \pm 6.290)\times 10^{-5}$ to $(1.130 \pm 0.165)\times 10^{-6}$, and in-loop projection reduces it again to $(5.370 \pm 0.113)\times 10^{-7}$, with CAP statistically tied to Proj-FNO as expected under the exact orthogonal-projector setting. SoftDiv is a tuned negative control rather than a contender. Despite matched compute, it remains worse than raw FNO and far worse than exact projection.

\begin{table}[t!]
\centering
\caption{Main capacity- and compute-matched NS-128 FNO study. Exact in-loop projection is the dominant intervention on the strongest periodic benchmark. Smaller values are better in every numeric column.}
\label{tab:main128}
\resizebox{\linewidth}{!}{%
\begin{tabular}{lccccc}
\toprule
Variant & Test MSE & Rollout MSE-AUC & Rollout MSE@100 & Rollout Div@100 & Latency (ms)\\
\midrule
Raw-FNO & $(5.810 \pm 3.600)\times 10^{-8}$ & $(4.210 \pm 2.750)\times 10^{-5}$ & $(9.390 \pm 6.290)\times 10^{-5}$ & $(2.900 \pm 1.030)\times 10^{-2}$ & $8.75 \pm 0.25$\\
PostHoc-FNO & $(1.570 \pm 0.305)\times 10^{-9}$ & $(6.200 \pm 0.749)\times 10^{-7}$ & $(1.130 \pm 0.165)\times 10^{-6}$ & $(2.670 \pm 0.134)\times 10^{-5}$ & $11.39 \pm 0.26$\\
Proj-FNO & $(8.270 \pm 0.577)\times 10^{-10}$ & $(3.130 \pm 0.029)\times 10^{-7}$ & $(5.370 \pm 0.113)\times 10^{-7}$ & $(2.100 \pm 0.083)\times 10^{-5}$ & $11.49 \pm 0.13$\\
CAP-FNO & $(8.150 \pm 0.244)\times 10^{-10}$ & $(3.180 \pm 0.126)\times 10^{-7}$ & $(5.410 \pm 0.243)\times 10^{-7}$ & $(2.150 \pm 0.252)\times 10^{-5}$ & $11.42 \pm 0.08$\\
SoftDiv-FNO & $(1.550 \pm 1.110)\times 10^{-7}$ & $(1.000 \pm 0.542)\times 10^{-4}$ & $(2.360 \pm 1.370)\times 10^{-4}$ & $(4.550 \pm 1.100)\times 10^{-2}$ & $8.69 \pm 0.24$\\
\bottomrule
\end{tabular}
}
\end{table}

In the approximate regime, the screened and controlled-mismatch experiments already show that operator family matters more than solving the original discrete constraint harder. Mild screened or geometry-aware cleanup can help, but stronger cleanup often moves the trajectory off the learned rollout manifold. The validation-to-deployment protocol below turns this approximate-regime observation into an operating rule, while Appendix~\ref{app:approx-extensions} reports aligned external-backbone and geometry-aware extensions. Together, these results center our study on baseline quality and operator exactness rather than on proposal-side novelty.

Figure~\ref{fig:regimesummary} compresses this split. In the exact regime, projection moves the frontier sharply toward lower divergence and lower error. By contrast, in the approximate regime, the frontier bends. Mild screened or geometry-aware cleanup can help, but stronger cleanup frequently moves the trajectory off the learned rollout manifold.

\begin{figure}[t!]
\centering
\includegraphics[width=\textwidth]{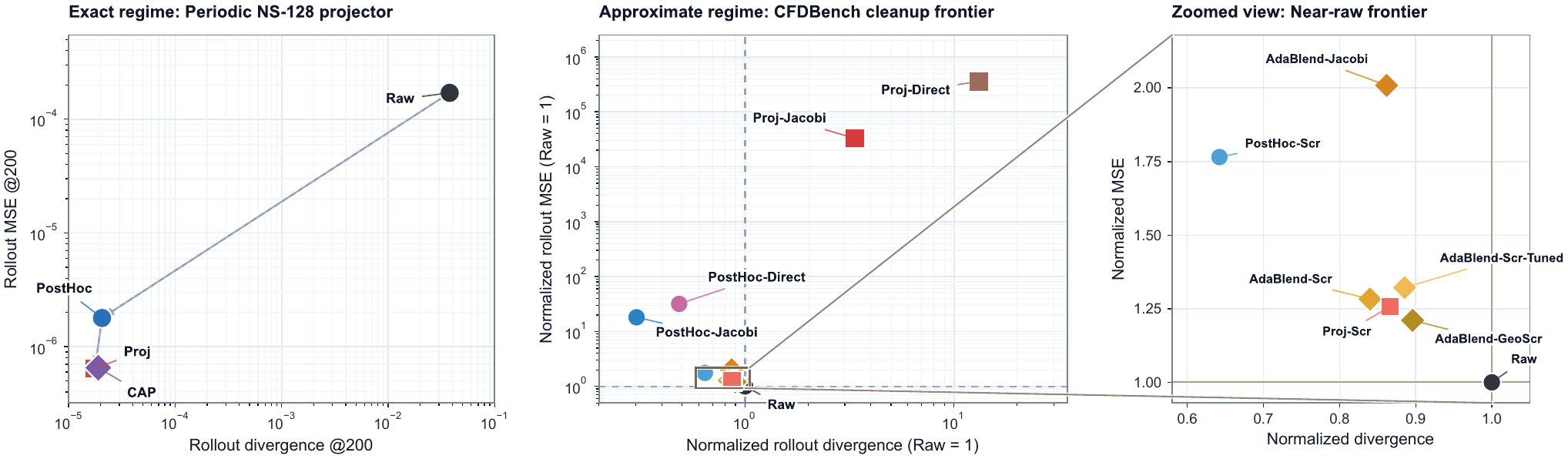}
\caption{Regime split between exact and approximate constraint enforcement. Exact projection behaves like a benign inductive bias on the periodic manifold, while approximate cleanup produces an accuracy--consistency frontier whose shape depends on operator family and dataset. In every subplot, lower divergence and lower error are better.}
\label{fig:regimesummary}
\end{figure}

\subsection{Mechanism Evidence: Distortion, Not Residual, Explains the Reversal}
The core mechanism is visible directly on held-out targets in Table~\ref{tab:projectordiag}. Exact fast Fourier transform (FFT) projection is essentially the identity on periodic ground truth, while approximate cleanup operators are qualitatively different. They often reduce divergence, but, under copied-boundary stencils, the measured full-domain divergence need not decrease for every operator, and they can perturb target states non-negligibly on cavity, tube, and dam. Direct is the cleanest stress test for Proposition~\ref{prop:residual_controls_solver}; it drives the linear-system residual near zero while producing larger target distortion than weaker approximate operators.

\begin{table}[t!]
\centering
\caption{Constraint-operator diagnostics on held-out targets. Exact FFT projection is nearly identity, whereas approximate cleanup trades constraint-solve strength against target distortion; measured full-domain divergence need not decrease for every copied-boundary operator. GT divergence before cleanup is reported once per dataset block because it is shared by all operators; lower values are better in all numeric columns.}
\label{tab:projectordiag}
\small
\resizebox{\linewidth}{!}{%
\begin{tabular}{lcccc}
\toprule
Operator & GT div after & GT distortion MSE & Relative system residual & Distortion relative to energy (\%)\\
\midrule
\multicolumn{5}{@{}l}{\textbf{NS-128 exact periodic, GT div before $=(2.263 \pm 0.450)\times 10^{-6}$}}\\
\cmidrule(lr){1-5}
FFT projection & $(2.787 \pm 0.521)\times 10^{-6}$ & $(8.126 \pm 4.079)\times 10^{-12}$ & not applicable & $0.00$\\
\addlinespace[2pt]
\multicolumn{5}{@{}l}{\textbf{Cavity approximate boundary-aware, GT div before $=(1.840 \pm 1.190)\times 10^{-1}$}}\\
\cmidrule(lr){1-5}
Jacobi-10 & $(6.400 \pm 5.800)\times 10^{-2}$ & $(1.200 \pm 1.500)\times 10^{-2}$ & $(1.954 \pm 0.822)\times 10^{-1}$ & $0.19$\\
MG-2 & $(6.000 \pm 5.600)\times 10^{-2}$ & $(1.500 \pm 1.700)\times 10^{-2}$ & $(1.666 \pm 0.694)\times 10^{-1}$ & $0.25$\\
SOR-10 & $(8.100 \pm 8.400)\times 10^{-2}$ & $(3.000 \pm 3.900)\times 10^{-2}$ & $(2.081 \pm 1.369)\times 10^{-1}$ & $0.66$\\
CG-10 & $(8.200 \pm 8.300)\times 10^{-2}$ & $(3.800 \pm 5.300)\times 10^{-2}$ & $(1.984 \pm 1.011)\times 10^{-1}$ & $0.94$\\
Direct & $(6.200 \pm 6.800)\times 10^{-2}$ & $(4.900 \pm 7.100)\times 10^{-2}$ & $(4.440 \pm 2.765)\times 10^{-6}$ & $1.25$\\
\addlinespace[2pt]
\multicolumn{5}{@{}l}{\textbf{Tube approximate boundary-aware, GT div before $=(5.300 \pm 2.200)\times 10^{-2}$}}\\
\cmidrule(lr){1-5}
Jacobi-20 & $(3.800 \pm 2.100)\times 10^{-2}$ & $(5.928 \pm 2.283)\times 10^{-4}$ & $(2.968 \pm 2.024)\times 10^{-1}$ & $0.25$\\
MG-2 & $(3.700 \pm 1.900)\times 10^{-2}$ & $(5.235 \pm 2.066)\times 10^{-4}$ & $(3.237 \pm 1.912)\times 10^{-1}$ & $0.22$\\
SOR-20 & $(5.300 \pm 1.500)\times 10^{-2}$ & $(2.602 \pm 2.677)\times 10^{-3}$ & $(5.563 \pm 4.560)\times 10^{-1}$ & $0.85$\\
CG-20 & $(5.900 \pm 1.900)\times 10^{-2}$ & $(9.616 \pm 1.087)\times 10^{-3}$ & $(5.357 \pm 4.229)\times 10^{-1}$ & $2.98$\\
Direct & $(5.400 \pm 2.800)\times 10^{-2}$ & $(1.300 \pm 1.400)\times 10^{-2}$ & $(2.945 \pm 2.568)\times 10^{-5}$ & $4.18$\\
\addlinespace[2pt]
\multicolumn{5}{@{}l}{\textbf{Dam approximate boundary-aware, GT div before $=(6.600 \pm 1.000)\times 10^{-2}$}}\\
\cmidrule(lr){1-5}
Jacobi-20 & $(3.200 \pm 0.500)\times 10^{-2}$ & $(2.136 \pm 0.645)\times 10^{-3}$ & $(3.244 \pm 0.360)\times 10^{-1}$ & $0.96$\\
Direct & $(4.400 \pm 0.700)\times 10^{-2}$ & $(1.000 \pm 0.700)\times 10^{-2}$ & $(1.173 \pm 0.335)\times 10^{-5}$ & $4.05$\\
\bottomrule
\end{tabular}
}
\end{table}

Table~\ref{tab:distortionlink} uses the mean-squared-error version of the target-distortion quantity appearing in Theorem~\ref{thm:target_distortion_rollout}. Across the saved approximate-regime sweeps, ground-truth distortion MSE tracks rollout harm strongly, with Pearson $r_{\mathrm{P}}=0.905$ and Spearman $\rho=0.909$, while the relative Poisson residual is negatively correlated with rollout quality. Once the operator is approximate, solving the chosen system more aggressively is not the right default intervention. Appendix~\ref{app:approx-qual} gives the corresponding spatial solver-family mechanism board on a held-out cylinder wake snapshot.

\begin{table}[t!]
\centering
\caption{Across approximate-regime operator sweeps, target distortion explains rollout harm much better than solver residual. All correlations use the same 12 saved sweep points.}
\label{tab:distortionlink}
\begin{tabular}{lcc}
\toprule
Predictor & Pearson $r_{\mathrm{P}}$ & Spearman $\rho$\\
\midrule
GT distortion MSE & $0.905$ & $0.909$\\
GT distortion relative to energy & $0.162$ & $0.329$\\
Poisson residual & $-0.387$ & $-0.427$\\
\bottomrule
\end{tabular}
\end{table}

Appendix~\ref{app:dense-screened-sweep} reports the dense screened-family sweep behind this near-identity reading. We next test the same mechanism at the rollout level through an intervention motivated by Proposition~\ref{prop:controlled_mismatch}, blending raw and cleaned states without retraining as $\tilde{\uu}_{\alpha}=(1-\alpha)\uu_{\mathrm{raw}}+\alpha\uu_{\mathrm{clean}}$.
The result is decisive in Table~\ref{tab:controlledmismatch}. In the matched comparison using the screened operator with $\lambda=16$ (Screened16) and a two-cell post hoc taper (Taper2), Dam@60 is minimized at $\alpha=0$ for both Screened16 and GeoScreened+Taper2, while Cylinder@60 prefers $\alpha=0$ for Screened16 and a barely blended $\alpha=0.1$ for GeoScreened+Taper2. Full cleanup is consistently worse. The approximate-regime sweet spot therefore stays at the raw forecast or very near the identity map rather than at strong cleanup.

\begin{table}[t!]
\centering
\caption{Controlled mismatch study on the hardest long-horizon rows. The best point is raw or a very mild blend, not full cleanup. Smaller values are better in the three MSE columns.}
\label{tab:controlledmismatch}
\resizebox{\linewidth}{!}{%
\begin{tabular}{llcccc}
\toprule
Dataset & Operator & Best $\alpha$ & Best final-step MSE & Mild $\alpha=0.10$ & Full $\alpha=1.0$\\
\midrule
Dam @60 & Screened16 & 0 & $(2.280 \pm 0.122)\times 10^{-4}$ & $(2.320 \pm 0.138)\times 10^{-4}$ & $(6.130 \pm 0.437)\times 10^{-4}$\\
Dam @60 & GeoScreened+Taper2 & 0 & $(2.280 \pm 0.122)\times 10^{-4}$ & $(2.320 \pm 0.136)\times 10^{-4}$ & $(5.580 \pm 0.396)\times 10^{-4}$\\
Cylinder @60 & Screened16 & 0 & $(5.300 \pm 3.400)\times 10^{-3}$ & $(5.400 \pm 3.600)\times 10^{-3}$ & $(1.270 \pm 0.480)\times 10^{-2}$\\
Cylinder @60 & GeoScreened+Taper2 & 0.1 & $(5.300 \pm 3.600)\times 10^{-3}$ & $(5.300 \pm 3.600)\times 10^{-3}$ & $(8.900 \pm 5.200)\times 10^{-3}$\\
\bottomrule
\end{tabular}
}
\end{table}

Appendix~\ref{app:controlled-mismatch-details} reports the corresponding controlled-mismatch curves, and Appendix~\ref{app:approx-qual} reports case-level response-atlas and cylinder alignment-flip diagnostics for the same approximate-regime mechanism.

\paragraph{Validation-to-deployment protocol.}
Table~\ref{tab:deploymentprotocol} makes this protocol concrete on the hardest long-horizon non-periodic rows. Operationally, the protocol starts from one strong raw checkpoint trained under the full benchmark recipe. It then evaluates a small menu of deployment candidates in physical units on validation rollouts, including Raw, adaptive screened and GeoScreened variants, tuned post hoc screened and GeoScreened cleanup, and Direct as a strong-solver anchor. The selected rule is the one with the lowest validation final-horizon rollout MSE for each dataset, and the test report includes that selected rule together with Raw and Direct anchors. The repeated-run robustness extension summarized in Appendix~\ref{app:seed-protocol} leaves the selected rule unchanged.

On the official-split Dam@60 and Cylinder@60 rows, validation keeps the raw forecast rather than any always-clean intervention, exactly matching the controlled-mismatch story that the optimal point is raw or nearly raw. On the geometry holdouts, validation shifts to milder approximate operators instead, with PostHoc-Screened at $\lambda=16$ on Dam and AdaBlend-GeoScreened at $\tau=0.60$ and $q=1.0$ on Cylinder. Across all four validation-selected deployment rows, Direct is never selected; validation chooses either Raw or a milder screened or GeoScreened operator.

Thus, in the approximate regime, the deployment baseline should be selected from among operator families and strengths using rollout validation metrics, rather than defaulting to the strongest physically motivated solve. Appendix~\ref{app:approx-constructive-rule} restates this as a compact validation checklist, while the supporting geometry-aware variants, external-backbone validation, and projection-frame ablation in Appendix~\ref{app:approx-extensions} refine the operating rule without changing its basic selection logic.

\begin{table}[t!]
\centering
\caption{Validation-to-deployment protocol on the four hardest approximate-regime rows. The official-split Dam and Cylinder rows keep the raw forecast, while the geometry holdouts select milder approximate operators rather than Direct cleanup. Smaller values are better in the three MSE columns.}
\label{tab:deploymentprotocol}
\resizebox{\linewidth}{!}{%
\begin{tabular}{lccccc}
\toprule
Setting & Validation-selected rule & Hyperparameters & Selected final-step MSE & Raw final-step MSE & Direct final-step MSE\\
\midrule
Dam (official split) & Raw & none & $(2.400 \pm 0.191)\times 10^{-4}$ & $(2.400 \pm 0.191)\times 10^{-4}$ & $(3.020 \pm 0.240)\times 10^{-2}$\\
Cylinder (official split) & Raw & none & $(4.400 \pm 2.800)\times 10^{-3}$ & $(4.400 \pm 2.800)\times 10^{-3}$ & $(2.380 \pm 0.328)\times 10^{-1}$\\
Dam geometry holdout & PostHoc-Screened & $\lambda=16$ & $(3.540 \pm 0.470)\times 10^{-2}$ & $(3.720 \pm 0.520)\times 10^{-2}$ & $(5.210 \pm 0.400)\times 10^{-2}$\\
Cylinder geometry holdout & AdaBlend-GeoScreened & $\tau=0.60$, $q=1.0$ & $(1.910 \pm 0.480)\times 10^{-2}$ & $(2.210 \pm 0.550)\times 10^{-2}$ & $(9.040 \pm 2.660)\times 10^{-2}$\\
\bottomrule
\end{tabular}
}
\end{table}

\section{Discussion}
In summary, if an exact orthogonal projector matches the benchmark manifold, in-loop projection is the mandatory baseline and usually the dominant intervention. We verified this claim against strong raw baselines, matched compute, long-horizon metrics, repeated-run checks, and external-backbone checks.

The controlled mismatch study makes the approximate-regime claim materially stronger than a simple operator-family comparison. It turns the diagnostic story into a causal one. As soon as a mismatched operator is blended in more aggressively, rollout error rises even while divergence falls, exactly the sign pattern allowed by Proposition~\ref{prop:controlled_mismatch}. Combined with the distortion and residual analysis, the controlled-mismatch evidence isolates the failure mode as operator-induced state distortion rather than insufficient linear-system convergence.

The broader lesson is about operator--data alignment in structured prediction, not only incompressible flow. Appendix~\ref{app:hierarchical-support} provides supporting hierarchical-forecasting evidence. Exact coherent reconciliation transfers as a stable low-risk baseline across Traffic, Labour, TourismLarge, OldTourismLarge, and Wiki2, while the blended top-down family spans the full range from catastrophic to dominant depending on the dataset. This hierarchical evidence is the broader structured-prediction analogue of the CFD finding. Exactness determines whether constraint enforcement acts as a benign inductive bias or as a repeated off-manifold perturbation in autoregressive forecasting. Once the operator is approximate, there is no universal monotone relation between stronger repair and better rollout. Exact constraints can act like a benign inductive bias, but approximate constraints can also act like a recurring source of distribution shift or, more rarely, like a powerful task-specific prior.

This operator-dependent ambiguity is why strong baselines matter so much here. A weak raw model, a short training recipe, or a low-quality long-horizon evaluation can make approximate cleanup look either better or worse than it really is. The evaluation suite used here is designed to reduce that ambiguity while staying within a deliberately limited scope. The strongest evidence remains on 2D incompressible forecasting, and the hierarchical support track broadens the setting without claiming a full map of structured prediction.

\section{Conclusion}
We establish a regime-level evaluation rule for constraint enforcement in autoregressive forecasting. When the constraint operator is exact on the target manifold and orthogonal with respect to the benchmark geometry, as in periodic NS-128, exact projection substantially improves rollout accuracy. Post hoc projection is a mandatory comparator, and in-loop projection is the primary baseline. When exactness is unavailable, as in boundary-aware CFDBench cleanup, stronger enforcement is no longer a safe default. The decisive quantity is not how accurately a Poisson system is solved but how much the chosen operator distorts target trajectories and the learned rollout distribution. Our screened-family sweeps, stronger-solver controls, controlled mismatch experiments, adaptive cleanup tests, and external-backbone checks all support the same deployment rule. Compare operator families in physical units, validate cleanup strength using rollout metrics, and keep raw or near-identity forecasts as serious candidates. The hierarchical forecasting appendix shows the same logic beyond CFD. Exact reconciliation is a stable baseline, while approximate repair can be harmful or beneficial depending on the dataset. Constraint enforcement should therefore be benchmarked by operator--data alignment first and enforcement strength second.

\bibliographystyle{plainnat}
\bibliography{div_refs}

\appendix
\startcontents[apx]
\section*{Appendix Table of Contents}
\printcontents[apx]{l}{1}{\setcounter{tocdepth}{2}}

\section{Theoretical Analysis}
\label{app:theory}
The following results formalize the mechanism separating exact and approximate operators used throughout the experiments. Theorem~\ref{thm:exact_on_target} shows that exact-on-target repair removes additive target distortion. Proposition~\ref{prop:residual_controls_solver} shows that solver residual controls only implementation error around a fixed operator. Theorem~\ref{thm:target_distortion_rollout} shows that trajectorywise target distortion enters rollout bounds directly. Proposition~\ref{prop:controlled_mismatch} shows why raw or near-raw cleanup can be optimal when the cleanup direction is misaligned with the target.

\begin{proposition}[Orthogonal projection identity]
\label{prop:orthogonal_projection_identity}
Let $\mathcal{H}$ be a finite-dimensional real inner product space, let $\mathcal{M}\subset\mathcal{H}$ be a linear subspace, and let $P:\mathcal{H}\to\mathcal{M}$ be the orthogonal projector onto $\mathcal{M}$. Then, for every $x\in\mathcal{H}$ and $y\in\mathcal{M}$, we have
\begin{equation}
\|P x-y\|^2
=
\|x-y\|^2-\|(I-P)x\|^2
\le
\|x-y\|^2.
\label{eq:orthogonal_projection_identity_app}
\end{equation}
Applied to the periodic divergence-free subspace with $P=\proj$, the projection identity is Equation~\ref{eq:orthogonal_proj}.
\end{proposition}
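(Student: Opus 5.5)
The argument is the Pythagorean theorem applied to the orthogonal decomposition induced by $P$. Write
\[
x-y = (Px-y) + (I-P)x .
\]
Since $y\in\mathcal{M}$ and $Px\in\mathcal{M}$, the first summand $Px-y$ lies in $\mathcal{M}$. The second summand $(I-P)x$ lies in $\mathcal{M}^\perp$, by the defining property of the orthogonal projector: $P$ is the linear idempotent map with $\operatorname{range}P=\mathcal{M}$ and $\ker P=\mathcal{M}^\perp$. In finite dimensions $P$ exists and is unique because $\mathcal{H}=\mathcal{M}\oplus\mathcal{M}^\perp$, for example via an orthonormal basis of $\mathcal{M}$ from Gram--Schmidt.

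The key steps, in order, are as follows.

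First, verify $\langle Px-y,(I-P)x\rangle=0$. One route is to use self-adjointness and idempotence, which give $\langle m,(I-P)x\rangle=\langle Pm,(I-P)x\rangle=\langle m,P(I-P)x\rangle=0$ for any $m\in\mathcal{M}$. A more direct route is to note that $(I-P)x\in\mathcal{M}^\perp$ and $Px-y\in\mathcal{M}$.

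Second, expand the squared norm of the decomposition:
\[
\|x-y\|^2=\|Px-y\|^2+\|(I-P)x\|^2+2\langle Px-y,(I-P)x\rangle=\|Px-y\|^2+\|(I-P)x\|^2 .
\]
Rearranging gives the equality in Equation~\ref{eq:orthogonal_projection_identity_app}. The inequality follows because $\|(I-P)x\|^2\ge 0$.

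The only step needing any care is the final application to $\proj$. There I would argue that the discrete periodic divergence-free fields form a linear subspace $\mathcal{M}$ of $\mathcal{H}=\R^{2\times H\times W}$ with the Euclidean (equivalently, discrete $L_2$) inner product, and that $\proj$ is the orthogonal projector onto it.

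\begin{itemize}
\item By Parseval, the discrete Fourier transform is unitary up to a constant factor. This reduces orthogonality to checking each wavevector $\kk$ separately.
\item At $\kk=0$ the map is the identity, which keeps the mean.
\item For $\kk\neq 0$, the stream-function construction gives $\hat{\uu}\mapsto (I-\kk\kk^\top/\|\kk\|_2^2)\hat{\uu}$. This is the orthogonal projection in $\mathbb{C}^2$ onto $\kk^\perp$, the divergence-free directions at that wavevector.
\item Conjugate symmetry of the spectrum is preserved, so the map takes real fields to real fields.
\end{itemize}

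With this identification, the proposition specializes to Equation~\ref{eq:orthogonal_proj} for any target $\yy$ with $\proj(\yy)=\yy$.
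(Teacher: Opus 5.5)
Your proof is correct and matches the paper's: you split $x-y=(Px-y)+(I-P)x$ into a component in $\mathcal{M}$ and one in $\mathcal{M}^{\perp}$, apply the Pythagorean theorem, and drop the nonnegative term $\|(I-P)x\|^2$. Your Fourier-mode check that $\proj$ is an orthogonal projector goes beyond the paper, which only asserts it, but one claim needs a caveat: on even grids such as $128\times 128$, conjugate symmetry is \emph{not} preserved at indices with exactly one Nyquist component, because the conjugate partner of $(k_{\mathrm{Nyq}},k_y)$ is assigned $(k_{\mathrm{Nyq}},-k_y)$ rather than its negative, so the two symbols $I-\kk\kk^\top/\|\kk\|_2^2$ differ in the sign of their off-diagonal entry. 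This is why the paper's implementation zeros the Nyquist modes; the resulting map is zero on those modes and therefore still an orthogonal projector, so the identity holds for every target with $\proj(\yy)=\yy$, which is all Equation~\ref{eq:orthogonal_proj} requires.
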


\begin{proof}
Because $P$ is the orthogonal projector, $P x-y\in\mathcal{M}$ and $(I-P)x\in\mathcal{M}^{\perp}$. Since
\begin{equation}
x-y=(P x-y)+(I-P)x,
\end{equation}
the two summands are orthogonal. The Pythagorean theorem therefore gives
\begin{equation}
\|x-y\|^2
=
\|P x-y\|^2+\|(I-P)x\|^2,
\end{equation}
which rearranges to Equation~\ref{eq:orthogonal_projection_identity_app}. The inequality follows because the last term is nonnegative.
\end{proof}

\begin{theorem}[Exact-on-target repair removes additive target distortion]
\label{thm:exact_on_target}
Let $\mathcal{H}$ be a finite-dimensional real inner product space, let $\mathcal{M} \subset \mathcal{H}$ be a target set, and let $T:\mathcal{H}\to\mathcal{H}$ satisfy
\begin{equation}
T(y) = y \text{ for every } y \in \mathcal{M}.
\label{eq:exact_on_target}
\end{equation}
Assume that $T$ is $L_T$-Lipschitz on $\mathcal{H}$. Then, for every $x \in \mathcal{H}$ and every $y \in \mathcal{M}$,
\begin{equation}
\|T(x)-y\|
=
\|T(x)-T(y)\|
\le
L_T \|x-y\|.
\label{eq:exact_on_target_onestep}
\end{equation}
In particular, if $L_T \le 1$, then post hoc repair with $T$ cannot worsen one-step error to any target on $\mathcal{M}$.

Now, let $F:\mathcal{H}\to\mathcal{H}$ be a predictor, let $(y_t)_{t\ge 0} \subset \mathcal{M}$ be a target trajectory, and define the repaired rollout $x_{t+1} = T(F(x_t))$. If $F$ is $L_F$-Lipschitz and $\delta_t := \|F(y_t)-y_{t+1}\|$, then the rollout error satisfies
\begin{equation}
\|x_{t+1}-y_{t+1}\|
\le
L_T L_F \|x_t-y_t\|
+ L_T \delta_t.
\label{eq:exact_rollout_recursion}
\end{equation}
Consequently, for every $t \ge 1$,
\begin{equation}
\|x_t-y_t\|
\le
(L_T L_F)^t \|x_0-y_0\|
+ \sum_{s=0}^{t-1} (L_T L_F)^{t-1-s} L_T \delta_s.
\label{eq:exact_rollout_unrolled}
\end{equation}
If $L_T L_F < 1$ and $\delta_t \le \bar{\delta}$ for all $t$, then
\begin{equation}
\limsup_{t\to\infty}\|x_t-y_t\|
\le
\frac{L_T \bar{\delta}}{1-L_T L_F}.
\label{eq:exact_rollout_steady}
\end{equation}

\end{theorem}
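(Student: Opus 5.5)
The proof is a direct chain of elementary estimates, carried out in the order the statement lists them. For the one-step claim in Equation~\ref{eq:exact_on_target_onestep}, I would fix $x\in\mathcal{H}$ and $y\in\mathcal{M}$. I would then use the exact-on-target hypothesis in Equation~\ref{eq:exact_on_target} to replace $y$ by $T(y)$, which gives the equality $\|T(x)-y\|=\|T(x)-T(y)\|$. The Lipschitz property of $T$ then bounds this by $L_T\|x-y\|$. The ``in particular'' clause follows immediately: if $L_T\le 1$, then $\|T(x)-y\|\le\|x-y\|$, so post hoc repair never increases the error to any target on $\mathcal{M}$. Unlike Proposition~\ref{prop:orthogonal_projection_identity}, no orthogonality or linearity is needed; the whole point is that the fixed-point property removes the additive distortion term that would otherwise appear.

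For the rollout recursion in Equation~\ref{eq:exact_rollout_recursion}, the key observation is that $y_{t+1}\in\mathcal{M}$, so $y_{t+1}=T(y_{t+1})$. I would write
\begin{equation*}
\|x_{t+1}-y_{t+1}\| = \|T(F(x_t))-T(y_{t+1})\| \le L_T\|F(x_t)-y_{t+1}\|.
\end{equation*}
Inserting $F(y_t)$ and applying the triangle inequality gives
\begin{equation*}
\|F(x_t)-y_{t+1}\| \le \|F(x_t)-F(y_t)\| + \|F(y_t)-y_{t+1}\| \le L_F\|x_t-y_t\| + \delta_t .
\end{equation*}
Multiplying by $L_T$ yields Equation~\ref{eq:exact_rollout_recursion}. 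The only care needed is to apply exactness at $y_{t+1}$, which lies on $\mathcal{M}$, rather than at $F(y_t)$, which need not. This is exactly how the middle bias term of Equation~\ref{eq:approx_rollout_bias} disappears in the exact case.

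Next, I would prove the unrolled bound in Equation~\ref{eq:exact_rollout_unrolled} by induction on $t$. Write $a_t=\|x_t-y_t\|$ and $\rho=L_TL_F$. The base case $t=1$ is the recursion at $t=0$. For the inductive step, I substitute the hypothesis for $a_t$ into $a_{t+1}\le\rho a_t+L_T\delta_t$ and reindex the sum. Because $\rho\ge 0$, multiplying the inequality by $\rho$ preserves its direction.

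Finally, for Equation~\ref{eq:exact_rollout_steady}, I assume $\rho<1$ and $\delta_s\le\bar\delta$. The first term $\rho^t a_0$ tends to $0$. The sum is bounded by $L_T\bar\delta\sum_{j=0}^{t-1}\rho^j\le L_T\bar\delta/(1-\rho)$, so taking the $\limsup$ gives the claim. None of these steps is a serious obstacle. The only subtle point is the one already flagged in the recursion step: exactness must be invoked on the target $y_{t+1}$, not on the prediction $F(y_t)$.
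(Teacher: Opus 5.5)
Your proposal is correct and matches the paper's proof step for step. You apply exactness at $y_{t+1}$, use the Lipschitz bound on $T$, route the triangle inequality through $F(y_t)$, and finish with induction and a geometric series. Your explicit induction and the remark that $L_T L_F \ge 0$ preserves the inequality direction simply spell out what the paper's ``iterating this affine recursion'' leaves implicit.
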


\begin{proof}
Equation~\ref{eq:exact_on_target_onestep} follows immediately from Equation~\ref{eq:exact_on_target} and the Lipschitz property:
\begin{equation}
\|T(x)-y\|
=
\|T(x)-T(y)\|
\le
L_T \|x-y\|.
\end{equation}

For the rollout recursion, because $y_{t+1}\in \mathcal{M}$, Equation~\ref{eq:exact_on_target} gives $T(y_{t+1})=y_{t+1}$. Therefore,
\begin{align}
\|x_{t+1}-y_{t+1}\|
&=
\|T(F(x_t)) - T(y_{t+1})\| \\
&\le
L_T \|F(x_t)-y_{t+1}\| \\
&\le
L_T \|F(x_t)-F(y_t)\|
+ L_T \|F(y_t)-y_{t+1}\| \\
&\le
L_T L_F \|x_t-y_t\|
+ L_T \delta_t,
\end{align}
which is Equation~\ref{eq:exact_rollout_recursion}. Iterating this affine recursion gives Equation~\ref{eq:exact_rollout_unrolled}, and the steady-state bound Equation~\ref{eq:exact_rollout_steady} follows from the geometric series.

\end{proof}

\begin{corollary}[Automatic exactness for linear idempotent repair]
\label{cor:linear_idempotent_exactness}
Let $\mathcal{M}$ be a linear subspace of $\mathcal{H}$. If $T:\mathcal{H}\to\mathcal{H}$ is linear and idempotent with $\mathrm{range}(T)=\mathcal{M}$, then $T(y)=y$ for every $y\in\mathcal{M}$. Thus, Equation~\ref{eq:exact_on_target} holds automatically. The optimal Lipschitz constant is $\|T\|_{\mathrm{op}}$, the operator norm induced by the norm defined by the chosen inner product. Orthogonal projection onto a nonzero target subspace has $\|T\|_{\mathrm{op}}=1$, with the zero-subspace projector as the degenerate norm-zero case, while exact-but-nonorthogonal repair operators can have $\|T\|_{\mathrm{op}}>1$ without introducing any additive target-distortion term.
\end{corollary}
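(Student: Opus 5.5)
The proof proceeds in four short steps, in the order the corollary makes its claims: the fixed-point property, the Lipschitz constant, the orthogonal case, and the non-orthogonal case with operator norm above one.

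\textbf{Step 1 (fixed points).} Take $y\in\mathcal{M}=\mathrm{range}(T)$ and write $y=T(z)$ for some $z\in\mathcal{H}$. Idempotence then gives $T(y)=T(T(z))=T(z)=y$. This is exactly Equation~\ref{eq:exact_on_target}.

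\textbf{Step 2 (Lipschitz constant).} Because $T$ is linear, $\|T(x)-T(x')\|=\|T(x-x')\|\le\|T\|_{\mathrm{op}}\|x-x'\|$. So $T$ is $\|T\|_{\mathrm{op}}$-Lipschitz, and $\|T\|_{\mathrm{op}}$ is finite since $\mathcal{H}$ is finite-dimensional. Conversely, any Lipschitz constant $L$ satisfies $\|T(h)\|=\|T(h)-T(0)\|\le L\|h\|$ for all $h$, hence $L\ge\|T\|_{\mathrm{op}}$. The optimal constant is therefore $\|T\|_{\mathrm{op}}$, and Theorem~\ref{thm:exact_on_target} applies with $L_T=\|T\|_{\mathrm{op}}$.

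\textbf{Step 3 (orthogonal projector).} Let $T=P$ be the orthogonal projector onto $\mathcal{M}$. Proposition~\ref{prop:orthogonal_projection_identity} with $y=0$ gives $\|Px\|^2=\|x\|^2-\|(I-P)x\|^2\le\|x\|^2$, so $\|P\|_{\mathrm{op}}\le1$. If $\mathcal{M}\neq\{0\}$, Step 1 applied to any nonzero $y\in\mathcal{M}$ gives $\|Py\|=\|y\|$, so the norm equals $1$. If $\mathcal{M}=\{0\}$, then $P=0$ and the norm is $0$.

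\textbf{Step 4 (non-orthogonal case).} This is the only nonroutine step: we must exhibit an exact-on-target, non-orthogonal repair with norm above one. Use an oblique projection in $\R^2$:
\begin{equation}
T\begin{pmatrix}a\\b\end{pmatrix}=\begin{pmatrix}a+cb\\0\end{pmatrix},\qquad c\neq0.
\end{equation}
Its range is the first axis and $T^2=T$, so Step 1 makes it exact on $\mathcal{M}$. Evaluating at the unit vector $(0,1)$ gives $\|T(0,1)\|=|c|$. Evaluating at $(1,c)/\sqrt{1+c^2}$ gives $\|T\|_{\mathrm{op}}\ge\sqrt{1+c^2}>1$.

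For such an operator the rollout bound in Theorem~\ref{thm:exact_on_target} contains only the multiplicative factor $L_T$. No additive target-distortion term $\|T(y)-y\|$ appears, because that term vanishes by Step 1. This is the sense of ``without introducing any additive target-distortion term.'' Bottom-up reconciliation, written as $T=SG$ with $GS=I$, is the motivating instance of this oblique case.
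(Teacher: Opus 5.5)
Your Steps 1 and 2 are exactly the paper's proof: idempotence gives $Ty=T^2z=Tz=y$ on the range, and the operator norm is the optimal Lipschitz constant of a linear map, with your lower bound $L\ge\|T\|_{\mathrm{op}}$ spelling out what the paper only asserts. The paper's proof stops there and never argues the orthogonal-norm or oblique-case claims, so your Step 3 (Proposition~\ref{prop:orthogonal_projection_identity} at $y=0$, plus a nonzero fixed point) and your explicit oblique projection on $\R^2$ with $\|T\|_{\mathrm{op}}\ge\sqrt{1+c^2}>1$ are correct additions that make those remarks rigorous.
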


\begin{proof}
For any $y\in\mathcal{M}$, there exists $z$ with $y=Tz$, so $Ty=T^2z=Tz=y$. For linear maps, the optimal Lipschitz constant with respect to the induced norm is $\|T\|_{\mathrm{op}}$.
\end{proof}

\begin{remark}
Theorem~\ref{thm:exact_on_target} and Corollary~\ref{cor:linear_idempotent_exactness} give the abstract version of our exact-regime rule. Exact-but-nonorthogonal operators do not enjoy the Pythagorean identity from the periodic FFT projector, but they still avoid the additive operator-bias term that appears in approximate cleanup bounds. Their risk is conditioning through $\|T\|_{\mathrm{op}}$, not target distortion. This absence of target distortion is why exact bottom-up reconciliation is still a principled baseline even though it is not an orthogonal projector.
\end{remark}

\begin{proposition}[Poisson residual controls only solver error around the chosen operator]
\label{prop:residual_controls_solver}
Let $A_{\lambda}\in\R^{n\times n}$ be symmetric positive definite, let $\mu_{\lambda}:=\lambda_{\min}(A_{\lambda})>0$, and let $D\in\R^{n\times N}$ together with $G\in\R^{N\times n}$ define the cleanup family $Q^{\star}(x) := x - G p^{\star}(x)$, where $A_{\lambda} p^{\star}(x) = -D x$. Let $p(x)$ be any approximate linear-system solve with residual $r(x) := -D x - A_{\lambda} p(x)$, and define the implemented cleanup $Q(x) := x - G p(x)$. Then, for every $x\in\R^N$,
\begin{align}
p(x)-p^{\star}(x) &= -A_{\lambda}^{-1} r(x),
\label{eq:pressure_error_from_residual}\\
Q(x)-Q^{\star}(x) &= G A_{\lambda}^{-1} r(x),
\label{eq:operator_error_from_residual}\\
\|Q(x)-Q^{\star}(x)\|_2
&\le
\frac{\|G\|_2}{\mu_{\lambda}}\|r(x)\|_2,
\label{eq:operator_error_bound_from_residual}\\
\|D Q(x)-D Q^{\star}(x)\|_2
&\le
\frac{\|D G\|_2}{\mu_{\lambda}}\|r(x)\|_2.
\label{eq:divergence_error_bound_from_residual}
\end{align}
Moreover, for every target $y\in\R^N$,
\begin{equation}
\lvert \|Q(x)-y\|_2-\|Q^{\star}(x)-y\|_2\rvert
\le
\frac{\|G\|_2}{\mu_{\lambda}}\|r(x)\|_2,
\label{eq:distance_gap_from_residual}
\end{equation}
and in particular
\begin{equation}
\|Q(y)-y\|_2
\ge
\|Q^{\star}(y)-y\|_2
-
\frac{\|G\|_2}{\mu_{\lambda}}\|r(y)\|_2.
\label{eq:distortion_floor_with_solver_error}
\end{equation}

Consequently, if a sequence of stronger solves $p_k$ satisfies $\|r_k(x)\|_2\to 0$ for every $x$, then $Q_k(x)\to Q^{\star}(x)$ pointwise. A lower linear-system residual therefore shrinks the residual-controlled upper bound on the implementation gap, and residual convergence makes the implementation pointwise faithful to the same operator family $Q^{\star}$; it does not remove any intrinsic target distortion already present in $Q^{\star}$.
\end{proposition}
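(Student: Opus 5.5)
The whole statement is linear algebra around one identity. I first derive Equation~\ref{eq:pressure_error_from_residual}, then obtain everything else by multiplying by fixed matrices and using norm submultiplicativity and the triangle inequality.

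\textbf{Step 1: pressure and operator error.} By definition, $A_\lambda p^\star(x) = -Dx$ and $r(x) = -Dx - A_\lambda p(x)$. Subtracting gives $A_\lambda\bigl(p(x)-p^\star(x)\bigr) = -r(x)$. Since $A_\lambda$ is symmetric positive definite it is invertible, so $p(x)-p^\star(x) = -A_\lambda^{-1} r(x)$, which is Equation~\ref{eq:pressure_error_from_residual}. Substituting into $Q(x)-Q^\star(x) = -G\bigl(p(x)-p^\star(x)\bigr)$ gives Equation~\ref{eq:operator_error_from_residual}.

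\textbf{Step 2: the norm bounds.} These depend on one fact. For symmetric positive definite $A_\lambda$, the spectral theorem gives $\|A_\lambda^{-1}\|_2 = 1/\lambda_{\min}(A_\lambda) = 1/\mu_\lambda$. With submultiplicativity this yields Equation~\ref{eq:operator_error_bound_from_residual}. Applying $D$ to Equation~\ref{eq:operator_error_from_residual} gives $DQ(x)-DQ^\star(x) = DGA_\lambda^{-1}r(x)$, and the same estimate gives Equation~\ref{eq:divergence_error_bound_from_residual}.

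\textbf{Step 3: distance to a target.} The reverse triangle inequality gives
\[
\bigl|\,\|Q(x)-y\|_2-\|Q^\star(x)-y\|_2\,\bigr| \le \|Q(x)-Q^\star(x)\|_2 .
\]
Combining this with Step 2 gives Equation~\ref{eq:distance_gap_from_residual}. Setting $x=y$ and keeping the lower side of the absolute value gives Equation~\ref{eq:distortion_floor_with_solver_error}.

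\textbf{Step 4: convergence.} If $\|r_k(x)\|_2\to 0$, then Equation~\ref{eq:operator_error_bound_from_residual} with the fixed constant $\|G\|_2/\mu_\lambda$ gives $Q_k(x)\to Q^\star(x)$. By Equation~\ref{eq:distance_gap_from_residual} at $x=y$, this also gives $\|Q_k(y)-y\|_2\to\|Q^\star(y)-y\|_2$. So the distortion converges to the intrinsic distortion of $Q^\star$, not to zero, unless $Q^\star(y)=y$. This last claim is interpretive; I would state it through this limit rather than as a separate inequality.

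The only step that needs any care is $\|A_\lambda^{-1}\|_2=1/\mu_\lambda$. The eigenvalues of $A_\lambda^{-1}$ are the reciprocals of those of $A_\lambda$, and symmetry makes the spectral norm equal the largest absolute eigenvalue, which here is $1/\mu_\lambda$. The rest is routine.
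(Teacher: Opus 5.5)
Your proposal is correct and follows the paper's proof essentially step for step. It uses the residual identity $A_\lambda(p^\star(x)-p(x))=r(x)$ and substitutes it into $Q$, bounds via $\|A_\lambda^{-1}\|_2=1/\mu_\lambda$ and submultiplicativity, applies the reverse triangle inequality with $x=y$, and gets pointwise convergence from the operator-error bound; your added limit $\|Q_k(y)-y\|_2\to\|Q^\star(y)-y\|_2$ is a clean way to make the paper's closing interpretive sentence precise.
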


\begin{proof}
By the definitions of $p^{\star}(x)$ and $r(x)$,
\begin{equation}
A_{\lambda}(p^{\star}(x)-p(x))
=
-D x - A_{\lambda} p(x)
=
r(x),
\end{equation}
which proves Equation~\ref{eq:pressure_error_from_residual}. Substituting into the velocity update gives
\begin{equation}
Q(x)-Q^{\star}(x)
=
-G(p(x)-p^{\star}(x))
=
G A_{\lambda}^{-1} r(x),
\end{equation}
which is Equation~\ref{eq:operator_error_from_residual}. Since $A_{\lambda}$ is symmetric positive definite,
\begin{equation}
\|A_{\lambda}^{-1}\|_2 = \frac{1}{\mu_{\lambda}}.
\end{equation}
Therefore,
\begin{align}
\|Q(x)-Q^{\star}(x)\|_2
&\le
\|G\|_2 \|A_{\lambda}^{-1}\|_2 \|r(x)\|_2
=
\frac{\|G\|_2}{\mu_{\lambda}}\|r(x)\|_2,\\
\|D Q(x)-D Q^{\star}(x)\|_2
&\le
\|D G\|_2 \|A_{\lambda}^{-1}\|_2 \|r(x)\|_2
=
\frac{\|D G\|_2}{\mu_{\lambda}}\|r(x)\|_2,
\end{align}
which proves Equations~\ref{eq:operator_error_bound_from_residual} and \ref{eq:divergence_error_bound_from_residual}. The reverse triangle inequality then yields
\begin{equation}
\lvert \|Q(x)-y\|_2-\|Q^{\star}(x)-y\|_2\rvert
\le
\|Q(x)-Q^{\star}(x)\|_2,
\end{equation}
and Equation~\ref{eq:distance_gap_from_residual} follows from the previous bound. Setting $x=y$ gives Equation~\ref{eq:distortion_floor_with_solver_error}. Finally, if $\|r_k(x)\|_2\to 0$, then Equation~\ref{eq:operator_error_bound_from_residual} implies $\|Q_k(x)-Q^{\star}(x)\|_2\to 0$ for every $x$.
\end{proof}

\begin{remark}
Proposition~\ref{prop:residual_controls_solver} is the formal version of our stronger-solver negative controls. Direct, CG, SOR, MG, and Jacobi can all be compared to the exact application of their own underlying operator family. Lower residual only shrinks the residual-controlled bound in Equation~\ref{eq:operator_error_bound_from_residual}; it does not change the target-distortion floor in Equation~\ref{eq:distortion_floor_with_solver_error}. The residual-distortion separation is exactly why a near-exact solve of a mismatched operator can still lose to a milder screened or geometry-aware family.
\end{remark}

\begin{theorem}[Trajectorywise target distortion enters rollout bounds]
\label{thm:target_distortion_rollout}
Let $\mathcal{H}$ be a finite-dimensional real inner product space, let $\mathcal{M}\subset\mathcal{H}$ be a target set, and let $F:\mathcal{H}\to\mathcal{H}$ be an $L_F$-Lipschitz predictor. For $i\in\{1,2\}$, let $T_i:\mathcal{H}\to\mathcal{H}$ be an $L_i$-Lipschitz cleanup operator. Consider a target trajectory $(y_t)_{t\ge 0}\subset\mathcal{M}$ and the repaired rollouts $x_{t+1}^{(i)} = T_i(F(x_t^{(i)}))$, with $e_t^{(i)} := x_t^{(i)} - y_t$. Define the raw model's one-step target error as $\delta_t := \|F(y_t)-y_{t+1}\|$ and the target-distortion sequence along that trajectory as $\beta_{i,t} := \|T_i(y_t)-y_t\|$, $t\ge 0$. Then, for every $t\ge 0$ and each $i\in\{1,2\}$,
\begin{equation}
\|e_{t+1}^{(i)}\|
\le
L_i L_F \|e_t^{(i)}\|
+ L_i \delta_t
+ \beta_{i,t+1}.
\label{eq:trajectory_distortion_rollout_bound}
\end{equation}
Consequently, if $b_0^{(i)}\ge \|e_0^{(i)}\|$ and
\begin{equation}
b_{t+1}^{(i)} := L_i L_F b_t^{(i)} + L_i \delta_t + \beta_{i,t+1},
\label{eq:bound_recursion}
\end{equation}
then $\|e_t^{(i)}\|\le b_t^{(i)}$ for all $t\ge 0$.
\end{theorem}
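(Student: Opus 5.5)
The plan mirrors the proof of Theorem~\ref{thm:exact_on_target}; the only change is that exactness $T_i(y_{t+1})=y_{t+1}$ is no longer available. Fix $i$ and $t$. I will insert the intermediate point $T_i(y_{t+1})$ and split the error with the triangle inequality:
\begin{equation*}
\|e_{t+1}^{(i)}\| = \|T_i(F(x_t^{(i)}))-y_{t+1}\|
\le \|T_i(F(x_t^{(i)}))-T_i(y_{t+1})\| + \|T_i(y_{t+1})-y_{t+1}\|.
\end{equation*}
The second term is exactly $\beta_{i,t+1}$ by definition. This is the additive distortion term that vanishes in the exact case.

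For the first term, I apply the $L_i$-Lipschitz property of $T_i$ to get $L_i\|F(x_t^{(i)})-y_{t+1}\|$. I then insert $F(y_t)$ and use the triangle inequality again, giving
\begin{equation*}
\|F(x_t^{(i)})-y_{t+1}\| \le \|F(x_t^{(i)})-F(y_t)\| + \|F(y_t)-y_{t+1}\| \le L_F\|e_t^{(i)}\| + \delta_t.
\end{equation*}
Multiplying by $L_i$ and combining with the distortion term yields Equation~\ref{eq:trajectory_distortion_rollout_bound}.

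The comparison-sequence claim follows by induction on $t$. The base case $\|e_0^{(i)}\|\le b_0^{(i)}$ is a hypothesis. For the inductive step, the right-hand side of Equation~\ref{eq:trajectory_distortion_rollout_bound} is nondecreasing in $\|e_t^{(i)}\|$ because the coefficient $L_iL_F\ge 0$. Hence $\|e_t^{(i)}\|\le b_t^{(i)}$ implies $\|e_{t+1}^{(i)}\|\le L_iL_Fb_t^{(i)}+L_i\delta_t+\beta_{i,t+1}=b_{t+1}^{(i)}$.

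I expect no real obstacle. The one point needing care is the choice of anchor. The argument must compare against $T_i(y_{t+1})$, not $T_i(F(y_t))$, so that the distortion appears unmultiplied by $L_i$ and indexed at $t+1$, exactly as stated. Anchoring instead at $T_i(F(y_t))$ would produce the operator-bias term of Equation~\ref{eq:approx_rollout_bias} rather than the target-distortion sequence $\beta_{i,t+1}$.
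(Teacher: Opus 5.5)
Your proof is correct and essentially the paper's argument. The paper inserts both $T_i(F(y_t))$ and $T_i(y_{t+1})$ in one three-term triangle inequality and applies the Lipschitz bound of $T_i$ to the first two pieces, while you apply it once to $\|T_i(F(x_t^{(i)}))-T_i(y_{t+1})\|$ and then split inside via $F(y_t)$; this gives the identical bound and the same induction, with your remark that $L_iL_F\ge 0$ making the monotonicity step explicit.
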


\begin{proof}
Fix $i\in\{1,2\}$. Using the rollout definition, we add and subtract $T_i(F(y_t))$ and $T_i(y_{t+1})$, obtaining
\begin{align}
\|e_{t+1}^{(i)}\|
&=
\|T_i(F(x_t^{(i)})) - y_{t+1}\| \\
&\le
\|T_i(F(x_t^{(i)}))-T_i(F(y_t))\|
+ \|T_i(F(y_t))-T_i(y_{t+1})\|
+ \|T_i(y_{t+1})-y_{t+1}\| \\
&\le
L_i \|F(x_t^{(i)})-F(y_t)\|
+ L_i \|F(y_t)-y_{t+1}\|
+ \beta_{i,t+1} \\
&\le
L_i L_F \|e_t^{(i)}\|
+ L_i \delta_t
+ \beta_{i,t+1},
\end{align}
which proves Equation~\ref{eq:trajectory_distortion_rollout_bound}. The recursion in Equation~\ref{eq:bound_recursion} therefore upper-bounds the true error by induction on $t$.
\end{proof}

\begin{corollary}[Smaller target distortion gives a tighter autoregressive bound]
\label{cor:operator_bound_domination}
Under the setup of Theorem~\ref{thm:target_distortion_rollout}, assume
\begin{equation}
\begin{aligned}
b_0^{(1)} &\le b_0^{(2)},\\
L_1 &\le L_2,\\
\beta_{1,t} &\le \beta_{2,t} \text{ for every } t\ge 1.
\end{aligned}
\label{eq:operator_comparison_conditions}
\end{equation}
Then we have
\begin{equation}
b_t^{(1)} \le b_t^{(2)} \text{ for all } t\ge 0.
\label{eq:operator_bound_domination}
\end{equation}
Thus, under matched raw dynamics, any operator family with no larger Lipschitz factor and no larger trajectorywise target distortion along the same target path has a uniformly tighter autoregressive bound at every horizon.
\end{corollary}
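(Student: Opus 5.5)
The plan is a straightforward induction on $t$ applied to the two scalar recursions in Equation~\ref{eq:bound_recursion}. Note that they share the same raw-dynamics data $L_F$ and $\delta_t$. The base case $t=0$ is exactly the assumption $b_0^{(1)}\le b_0^{(2)}$ in Equation~\ref{eq:operator_comparison_conditions}.

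For the inductive step, assume $b_t^{(1)}\le b_t^{(2)}$. The first thing I would establish is that every bound is nonnegative. Indeed, $b_0^{(i)}\ge\|e_0^{(i)}\|\ge 0$. All coefficients $L_i$, $L_F$, $\delta_t$, $\beta_{i,t+1}$ are nonnegative, as Lipschitz constants and norms. So induction through Equation~\ref{eq:bound_recursion} gives $b_t^{(i)}\ge 0$ for all $t$.

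With nonnegativity in hand, I would compare the three terms of the recursion one at a time.
\begin{itemize}
\item Multiplicative term: $L_1L_Fb_t^{(1)}\le L_2L_Fb_t^{(1)}\le L_2L_Fb_t^{(2)}$. The first inequality uses $L_1\le L_2$ together with $L_Fb_t^{(1)}\ge0$. The second uses the induction hypothesis and $L_2L_F\ge 0$.
\item Forcing term: $L_1\delta_t\le L_2\delta_t$, since $\delta_t\ge0$.
\item Distortion term: $\beta_{1,t+1}\le\beta_{2,t+1}$, directly from the hypothesis applied at index $t+1\ge1$.
\end{itemize}
Summing the three inequalities gives $b_{t+1}^{(1)}\le b_{t+1}^{(2)}$, which closes the induction and proves Equation~\ref{eq:operator_bound_domination}.

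The only delicate point is the nonnegativity bookkeeping in the multiplicative term. Monotonicity of $(L,b)\mapsto LL_Fb$ in each argument requires both factors to be nonnegative, which is why I would establish $b_t^{(i)}\ge0$ first. The interpretive sentence then follows from Theorem~\ref{thm:target_distortion_rollout}, since $\|e_t^{(i)}\|\le b_t^{(i)}$: the bound for operator $1$ is uniformly no larger at every horizon. This is a statement about the bounds, not a claim that the actual errors are ordered.
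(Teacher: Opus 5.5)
Your proposal is correct and matches the paper's proof: the same induction on $t$, with the inductive step first replacing $L_1$ by $L_2$ and $\beta_{1,t+1}$ by $\beta_{2,t+1}$, then applying the induction hypothesis to the $b_t$ term. Your explicit check that $b_t^{(i)}\ge 0$ (inherited from $b_0^{(i)}\ge\|e_0^{(i)}\|$) is genuinely needed for the first of those inequalities; the paper uses it without stating it.
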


\begin{proof}
The claim follows by induction. It holds at $t=0$ by assumption. If it holds at time $t$, then
\begin{align}
b_{t+1}^{(1)}
&=
L_1 L_F b_t^{(1)} + L_1 \delta_t + \beta_{1,t+1} \\
&\le
L_2 L_F b_t^{(1)} + L_2 \delta_t + \beta_{2,t+1} \\
&\le
L_2 L_F b_t^{(2)} + L_2 \delta_t + \beta_{2,t+1} \\
&=
b_{t+1}^{(2)},
\end{align}
which closes the induction.
\end{proof}

\begin{corollary}[Uniform distortion and steady-state bounds]
\label{cor:uniform_distortion_rollout}
Under the setup of Theorem~\ref{thm:target_distortion_rollout}, if the distortion is uniformly bounded on the whole target set,
\begin{equation}
\beta_i := \sup_{y\in\mathcal{M}} \|T_i(y)-y\| < \infty,
\label{eq:uniform_target_distortion}
\end{equation}
then $\beta_{i,t}\le \beta_i$ for every $t$, and therefore
\begin{equation}
\|e_{t+1}^{(i)}\|
\le
L_i L_F \|e_t^{(i)}\|
+ L_i \delta_t
+ \beta_i.
\label{eq:uniform_distortion_rollout_bound}
\end{equation}

Finally, if $L_i L_F<1$, $\delta_t\le \bar{\delta}$ for all $t$, and $\beta_{i,t}\le \bar{\beta}_i$ for all $t$, then
\begin{equation}
\limsup_{t\to\infty}\|e_t^{(i)}\|
\le
\frac{L_i \bar{\delta} + \bar{\beta}_i}{1-L_i L_F}.
\label{eq:uniform_distortion_steady_state}
\end{equation}
Under Equation~\ref{eq:uniform_target_distortion}, one may take $\bar{\beta}_i=\beta_i$. Exact-on-target operators are the special case $\bar{\beta}_i=0$.
\end{corollary}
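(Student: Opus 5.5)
The plan is to treat both parts of Corollary~\ref{cor:uniform_distortion_rollout} as direct consequences of Theorem~\ref{thm:target_distortion_rollout}. The work is a monotonicity substitution followed by a scalar affine recursion.

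For the first claim, I note that every $y_t$ lies in $\mathcal{M}$, so $\beta_{i,t}=\|T_i(y_t)-y_t\|\le\sup_{y\in\mathcal{M}}\|T_i(y)-y\|=\beta_i$ for every $t\ge 0$. Substituting $\beta_{i,t+1}\le\beta_i$ into Equation~\ref{eq:trajectory_distortion_rollout_bound} gives Equation~\ref{eq:uniform_distortion_rollout_bound} at once.

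For the steady-state claim, set $\rho:=L_iL_F<1$ and $a_t:=\|e_t^{(i)}\|$. Combining Equation~\ref{eq:trajectory_distortion_rollout_bound} with $\delta_t\le\bar\delta$ and $\beta_{i,t+1}\le\bar\beta_i$ yields
\begin{equation*}
a_{t+1}\le\rho a_t+c,\qquad c:=L_i\bar\delta+\bar\beta_i .
\end{equation*}
Unrolling by induction, exactly as in the passage from Equation~\ref{eq:exact_rollout_recursion} to Equation~\ref{eq:exact_rollout_unrolled}, gives
\begin{equation*}
a_t\le\rho^t a_0+c\sum_{s=0}^{t-1}\rho^s\le\rho^t a_0+\frac{c}{1-\rho}.
\end{equation*}
Since $\rho^t a_0\to 0$, taking $\limsup$ gives Equation~\ref{eq:uniform_distortion_steady_state}. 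If one prefers to work with the majorant $b_t^{(i)}$ of Equation~\ref{eq:bound_recursion}, the same argument applies, and $a_t\le b_t^{(i)}$ transfers the bound. The only care needed is that $a_0$ is finite, which holds trivially, and that $\rho\ge 0$, which keeps the inequalities monotone when unrolling.

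For the final remarks, the first part already shows that Equation~\ref{eq:uniform_target_distortion} allows $\bar\beta_i=\beta_i$. If $T_i$ is exact-on-target in the sense of Equation~\ref{eq:exact_on_target}, then $\beta_i=0$. The bound then reduces to $L_i\bar\delta/(1-L_iL_F)$, which recovers Equation~\ref{eq:exact_rollout_steady} of Theorem~\ref{thm:exact_on_target}. I do not expect a genuine obstacle. The only delicate point is bookkeeping: the recursion uses $\beta_{i,t+1}$ rather than $\beta_{i,t}$, so the hypothesis must cover all indices $t\ge 1$, and assuming it for all $t$ does.
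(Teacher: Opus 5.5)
Your proposal is correct and matches the paper's proof essentially step for step. You bound $\beta_{i,t+1}$ by the supremum because $y_{t+1}\in\mathcal{M}$, substitute this into Equation~\ref{eq:trajectory_distortion_rollout_bound}, then iterate the scalar affine recursion with constant $L_i\bar\delta+\bar\beta_i$ and sum the geometric series. Your extra remarks are correct bookkeeping that the paper leaves implicit: that $\rho=L_iL_F\ge 0$, the index shift to $\beta_{i,t+1}$, and that $\bar\beta_i=0$ recovers Equation~\ref{eq:exact_rollout_steady}.
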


\begin{proof}
If Equation~\ref{eq:uniform_target_distortion} holds, then $\beta_{i,t+1}\le \beta_i$ for every $t$, and Equation~\ref{eq:uniform_distortion_rollout_bound} follows immediately from Equation~\ref{eq:trajectory_distortion_rollout_bound}.

Finally, if $\delta_t\le \bar{\delta}$, $\beta_{i,t}\le \bar{\beta}_i$, and $L_i L_F<1$, then Equation~\ref{eq:trajectory_distortion_rollout_bound} yields the scalar affine recursion
\begin{equation}
\|e_{t+1}^{(i)}\|
\le
L_i L_F \|e_t^{(i)}\|
+ (L_i \bar{\delta}+\bar{\beta}_i).
\end{equation}
Iterating this recursion and summing the geometric series gives Equation~\ref{eq:uniform_distortion_steady_state}.
\end{proof}

\begin{remark}
Theorem~\ref{thm:target_distortion_rollout} uses a trajectorywise formulation rather than a bare global supremum. For a nonexact linear operator on an unbounded target subspace, the quantity in Equation~\ref{eq:uniform_target_distortion} is typically infinite. The right object for our rollout diagnostics is therefore the distortion actually seen along the benchmark trajectory, with the uniform bound kept only as a stronger corollary when the target set is bounded or the operator is exact on it. Under this formulation, the message stays the same: once the raw predictor $F$ and rollout regime are fixed, the operator family enters the bound through a Lipschitz factor and a target-distortion term, while Proposition~\ref{prop:residual_controls_solver} says that linear-system residual only measures how closely we implement a chosen operator family.
\end{remark}

\begin{proposition}[Controlled-mismatch optimum]
\label{prop:controlled_mismatch}
Let $\mathcal{H}$ be a finite-dimensional real inner product space, let $x \in \mathcal{H}$ be a raw forecast, let $y \in \mathcal{H}$ be the corresponding target, and let $Q:\mathcal{H}\to\mathcal{H}$ be any cleanup operator. Define the cleanup increment $c := Q(x) - x$ and the blended controlled-mismatch family $x_{\alpha} := (1-\alpha)x + \alpha Q(x) = x + \alpha c$, $\alpha \in [0,1]$. Then the one-step squared error $\phi(\alpha) := \|x_{\alpha} - y\|^2$ is the convex quadratic
\begin{equation}
\phi(\alpha)
=
\|x-y\|^2
+ 2 \alpha \langle x-y, c \rangle
+ \alpha^2 \|c\|^2.
\label{eq:blend_quadratic}
\end{equation}
If $c \neq 0$, the minimizing blend strength is therefore
\begin{equation}
\alpha^\star
=
\clip_{[0,1]}
(- \frac{\langle x-y, c \rangle}{\|c\|^2}),
\label{eq:blend_opt}
\end{equation}
while every $\alpha \in [0,1]$ is optimal when $c=0$.

Several immediate consequences follow. If $c \neq 0$ and $\langle x-y, c \rangle \ge 0$, then $\alpha^\star = 0$, so raw forecasting is the unique optimizer, and every nonzero amount of cleanup worsens the one-step MSE.

Full cleanup is no worse than raw forecasting if and only if $2 \langle x-y, c \rangle + \|c\|^2 \le 0$, and it is strictly better if and only if the inequality is strict.
\end{proposition}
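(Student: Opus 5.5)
The plan is to work entirely with the scalar function $\phi$ on $[0,1]$, since every claim in Proposition~\ref{prop:controlled_mismatch} is a statement about a one-dimensional quadratic. First I will expand the blended residual. Because $x_\alpha - y = (x-y) + \alpha c$, bilinearity and symmetry of the real inner product give
\begin{equation}
\phi(\alpha) = \langle (x-y)+\alpha c, (x-y)+\alpha c\rangle = \|x-y\|^2 + 2\alpha\langle x-y,c\rangle + \alpha^2\|c\|^2,
\end{equation}
which is Equation~\ref{eq:blend_quadratic}. Convexity is immediate: $\phi''(\alpha) = 2\|c\|^2 \ge 0$. This holds for any cleanup map $Q$, linear or not, because $c$ is a fixed vector once $x$ is fixed.

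For the minimizer, I will split on whether $c=0$. If $c=0$, then $\phi$ is constant, so every $\alpha\in[0,1]$ is optimal. If $c\neq 0$, then $\phi$ is strictly convex with unconstrained stationary point $\alpha_0 = -\langle x-y,c\rangle/\|c\|^2$. The minimizer over the interval is the projection of $\alpha_0$ onto $[0,1]$. To justify this, note that $\phi$ is decreasing on $(-\infty,\alpha_0]$ and increasing on $[\alpha_0,\infty)$. Three cases follow. If $\alpha_0\le 0$, $\phi$ is increasing on $[0,1]$, so the minimizer is $0$. If $\alpha_0\ge 1$, $\phi$ is decreasing there, so the minimizer is $1$. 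Otherwise the minimizer is $\alpha_0$ itself. This gives Equation~\ref{eq:blend_opt}, and uniqueness comes from strict convexity.

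The consequences follow directly. If $c\neq0$ and $\langle x-y,c\rangle\ge0$, then $\alpha_0\le 0$, so $\alpha^\star=0$ uniquely. Moreover, for $\alpha\in(0,1]$ we have $\phi(\alpha)-\phi(0) = 2\alpha\langle x-y,c\rangle + \alpha^2\|c\|^2 > 0$, so any positive amount of cleanup worsens the one-step error. For full cleanup, evaluating at $\alpha=1$ gives
\begin{equation}
\phi(1)-\phi(0) = 2\langle x-y,c\rangle + \|c\|^2.
\end{equation}
The stated criterion is exactly the sign of this difference: full cleanup is no worse than raw if and only if the difference is $\le 0$, and strictly better if and only if it is $<0$.

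None of these steps is a genuine obstacle. The only point needing care is the clipping argument: I have to state explicitly that a strictly convex quadratic restricted to an interval attains its minimum at the clipped vertex, and I have to keep the degenerate case $c=0$ separate so that no division by $\|c\|^2$ occurs.
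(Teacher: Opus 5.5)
Your proposal is correct and follows the paper's proof essentially step for step: expand $\|(x-y)+\alpha c\|^2$, clip the vertex of the strictly convex quadratic to $[0,1]$ (keeping $c=0$ separate), and read off the full-cleanup criterion from $\phi(1)-\phi(0)=2\langle x-y,c\rangle+\|c\|^2$. Your monotonicity argument for the clipping and your direct check that $\phi(\alpha)-\phi(0)>0$ for $\alpha\in(0,1]$ spell out what the paper leaves as immediate.
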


\begin{proof}
Set $e := x-y$. Since $x_{\alpha} - y = e + \alpha c$, expanding the square gives
\begin{equation}
\phi(\alpha)
=
\|e+\alpha c\|^2
=
\|e\|^2
+ 2 \alpha \langle e, c \rangle
+ \alpha^2 \|c\|^2,
\end{equation}
which is Equation~\ref{eq:blend_quadratic}. If $c \neq 0$, the blend objective is a strictly convex quadratic, so its unconstrained minimizer is $-\langle e,c\rangle / \|c\|^2$. Clipping to $[0,1]$ yields Equation~\ref{eq:blend_opt}. The raw-optimality claim is then immediate, and the full-cleanup condition follows from
\begin{equation}
\phi(1)-\phi(0) = 2 \langle x-y, c \rangle + \|c\|^2.
\end{equation}
\end{proof}

\begin{corollary}[Controlled mismatch under orthogonal projection]
\label{cor:controlled_mismatch_orthogonal}
In the setup of Proposition~\ref{prop:controlled_mismatch}, suppose $Q=P$ is the orthogonal projector onto a closed subspace $\mathcal{M} \subset \mathcal{H}$ and $y \in \mathcal{M}$. Then,
\begin{equation}
\phi(\alpha)
=
\|P x - y\|^2
+ (1-\alpha)^2 \|(I-P)x\|^2,
\label{eq:exact_blend_curve}
\end{equation}
so $\alpha=1$ is a minimizer, uniquely unless $x \in \mathcal{M}$; when $x\in\mathcal{M}$, every $\alpha\in[0,1]$ is optimal.
\end{corollary}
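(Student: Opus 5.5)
The plan is to substitute $Q=P$ into the quadratic of Proposition~\ref{prop:controlled_mismatch} and then use the orthogonal decomposition of the raw error, exactly as in the proof of Proposition~\ref{prop:orthogonal_projection_identity}. With $c = Px - x = -(I-P)x$, the blended state is $x_\alpha = x - \alpha(I-P)x$. Hence
\begin{equation*}
x_\alpha - y = (Px - y) + (1-\alpha)(I-P)x .
\end{equation*}
The first summand lies in $\mathcal{M}$, because $Px\in\mathcal{M}$ and $y\in\mathcal{M}$. The second lies in $\mathcal{M}^{\perp}$, because $P$ is the orthogonal projector. Since $\mathcal{H}$ is finite-dimensional, every subspace is closed, so $\mathcal{H}=\mathcal{M}\oplus\mathcal{M}^\perp$ and $P$ is well defined. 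The Pythagorean theorem then gives Equation~\ref{eq:exact_blend_curve} directly.

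As a cross-check against Equation~\ref{eq:blend_quadratic}, one can compute $\langle x-y, c\rangle = -\|(I-P)x\|^2$. This holds because $x - y = (Px-y) + (I-P)x$ and the first component is orthogonal to $c$. Likewise $\|c\|^2 = \|(I-P)x\|^2$. The quadratic then becomes $\|x-y\|^2 - 2\alpha\|(I-P)x\|^2 + \alpha^2\|(I-P)x\|^2$. Using $\|x-y\|^2 = \|Px-y\|^2 + \|(I-P)x\|^2$ rewrites this as the stated form. In the notation of Equation~\ref{eq:blend_opt}, this also gives $\alpha^\star = \clip_{[0,1]}(1) = 1$ whenever $c\neq 0$.

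For the optimality claims, observe that $\|Px-y\|^2$ does not depend on $\alpha$. The remaining term $(1-\alpha)^2\|(I-P)x\|^2$ is nonnegative and vanishes at $\alpha=1$, so $\alpha=1$ is always a minimizer. If $x\notin\mathcal{M}$, then $(I-P)x\neq 0$, so $(1-\alpha)^2$ carries a strictly positive coefficient and the minimizer on $[0,1]$ is unique. If $x\in\mathcal{M}$, then $(I-P)x = 0$ and $\phi$ is constant, so every $\alpha\in[0,1]$ is optimal.

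There is no real obstacle here. The only point needing care is the equivalence $(I-P)x=0 \iff x\in\mathcal{M}$, which follows from $P$ being the identity on $\mathcal{M}$ and having range exactly $\mathcal{M}$.
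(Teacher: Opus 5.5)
Your proposal is correct and follows the paper's proof essentially line for line. You write $c=-(I-P)x$, decompose $x_\alpha-y=(Px-y)+(1-\alpha)(I-P)x$ into components in $\mathcal{M}$ and $\mathcal{M}^{\perp}$, apply Pythagoras, and read off the minimizer from the nonnegative term $(1-\alpha)^2\|(I-P)x\|^2$; your cross-check via Equation~\ref{eq:blend_quadratic} and your justification of $(I-P)x=0 \iff x\in\mathcal{M}$ are correct details that the paper leaves implicit.
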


\begin{proof}
Decompose
\begin{equation}
x-y = (P x-y) + (I-P)x.
\end{equation}
Because $y \in \mathcal{M}$ and $P$ is the orthogonal projector onto $\mathcal{M}$, the first term lies in $\mathcal{M}$ and the second lies in $\mathcal{M}^{\perp}$; hence, they are orthogonal. Moreover, $c = P x - x = -(I-P)x$, so
\begin{equation}
x_{\alpha} - y = (P x-y) + (1-\alpha)(I-P)x.
\end{equation}
Taking norms and using orthogonality proves Equation~\ref{eq:exact_blend_curve}. The right-hand side is minimized at $\alpha=1$, and the minimizer is unique unless $(I-P)x=0$, in which case every $\alpha\in[0,1]$ is optimal.
\end{proof}

\begin{remark}
Proposition~\ref{prop:controlled_mismatch} turns the controlled-mismatch experiment into a precise diagnostic. Because the true residual is $y-x$, any improvement from some nonzero cleanup strength requires $\langle y-x,c\rangle > 0$, equivalently $\langle x-y,c\rangle < 0$. Corollary~\ref{cor:controlled_mismatch_orthogonal} gives the exact orthogonal regime: the cleanup increment is $c=Px-x=-(I-P)x$, so full cleanup cancels the orthogonal-to-subspace component $(I-P)x$. In the approximate regime, this sign is not fixed, which is exactly why the best empirical operating point can move to $\alpha=0$ or a very mild blend.
\end{remark}

\section{Exact Periodic Projection Code}
\label{app:exact-projection-code}
Listing~\ref{lst:periodic-hodge-projection} gives the compact PyTorch implementation of the exact periodic projection used in the NS-128 track.

\begin{lstlisting}[style=pythoncode,caption={Exact periodic Hodge projection for velocity tensors with shape \texttt{(..., 2, H, W)}.},label={lst:periodic-hodge-projection}]
import math
import torch


def spectral_grids(
    height: int,
    width: int,
    device: torch.device,
    domain_size: float = 2 * math.pi,
):
    dx = domain_size / width
    dy = domain_size / height
    kx = 2 * math.pi * torch.fft.rfftfreq(width, d=dx)
    ky = 2 * math.pi * torch.fft.fftfreq(height, d=dy)
    ky_grid, kx_grid = torch.meshgrid(ky, kx, indexing="ij")
    return kx_grid.to(device), ky_grid.to(device)


def curl_from_streamfunction(
    stream: torch.Tensor,
    domain_size: float = 2 * math.pi,
) -> torch.Tensor:
    flat_stream = stream.reshape(-1, *stream.shape[-2:])
    height, width = flat_stream.shape[-2:]
    kx, ky = spectral_grids(height, width, flat_stream.device, domain_size)
    stream_hat = torch.fft.rfft2(flat_stream.float())
    u_hat = 1j * ky * stream_hat
    v_hat = -1j * kx * stream_hat
    u = torch.fft.irfft2(u_hat, s=(height, width))
    v = torch.fft.irfft2(v_hat, s=(height, width))
    velocity = torch.stack([u, v], dim=1)
    return velocity.reshape(*stream.shape[:-2], 2, height, width).to(stream.dtype)


def periodic_hodge_projection(
    velocity: torch.Tensor,
    domain_size: float = 2 * math.pi,
) -> torch.Tensor:
    """Project periodic 2D velocity fields onto the divergence-free subspace."""
    if velocity.shape[-3] != 2:
        raise ValueError("velocity must have shape (..., 2, H, W)")

    original_dtype = velocity.dtype
    leading = velocity.shape[:-3]
    flat = velocity.reshape(-1, *velocity.shape[-3:])
    height, width = flat.shape[-2:]
    kx, ky = spectral_grids(height, width, flat.device, domain_size)

    u = flat[:, 0].float()
    v = flat[:, 1].float()
    u_hat = torch.fft.rfft2(u)
    v_hat = torch.fft.rfft2(v)
    k_sq = kx.square() + ky.square()
    k_sq = k_sq.clone()
    k_sq[0, 0] = 1.0

    vort_hat = 1j * kx * v_hat - 1j * ky * u_hat
    stream_hat = vort_hat / k_sq
    stream_hat[..., 0, 0] = 0.0
    # Real-valued spectral derivatives use the standard zero-Nyquist convention.
    if height % 2 == 0:
        stream_hat[..., height // 2, :] = 0.0
    if width % 2 == 0:
        stream_hat[..., :, width // 2] = 0.0
    stream = torch.fft.irfft2(stream_hat, s=(height, width))
    projected = curl_from_streamfunction(stream, domain_size)
    mean_velocity = flat.mean(dim=(-2, -1), keepdim=True)
    return (projected + mean_velocity).reshape(*leading, 2, height, width).to(original_dtype)
\end{lstlisting}

\section{Forecasting Variant Details}
\label{app:forecasting-variant-details}
On the periodic NS-128 main track, the raw baseline (Raw) uses no projection. The post hoc projection baseline (PostHoc) trains raw and projects each prediction only at evaluation time before feeding it back. The soft-divergence training variant (SoftDiv) optimizes
\begin{equation}
\mathcal{L}_{\mathrm{soft}}
= \mathcal{L}_{\mathrm{vel}}
+ \lambda_{\mathrm{div}}\|\divergence \hat{\uu}\|_2^2
+ \mathcal{L}_{\omega}
+ \mathcal{L}_{\mathrm{spec}}.
\end{equation}
The projected-rollout variant (Proj) projects every predicted state during rollout, and constraint-aware projection (CAP) projects both the autoregressive state and the training target.

On the non-periodic CFDBench tracks, we use adaptive blended cleanup (AdaBlend) to denote the gated variant and include Raw and the Dirichlet-Poisson cleanup baselines AdaBlend-Jacobi, PostHoc-Jacobi, PostHoc-MG, PostHoc-SOR, Proj-Jacobi, Proj-SOR, PostHoc-CG, and Proj-CG. We also add the screened family AdaBlend-Screened, PostHoc-Screened, and Proj-Screened. The projected models are trained with the same operator family and iteration budget used at evaluation time, so the comparison is training-matched rather than an evaluator-only sweep. We use matched budgets of $k=10$ for cavity and $k=20$ for tube, choose the MG V-cycle count by validation search under the same rollout protocol, and use $\lambda=8$ for the main screened-family training runs across the non-periodic experiments. Appendix~\ref{app:approx-extensions} reports additional evaluation-only replicated-boundary, tapered, and GeoScreened stress tests together with the corresponding GeoScreened adaptive variants on the hardest rows.

\section{Approximate-Regime Extensions}
\label{app:approx-extensions}

\subsection{Boundary-Preserving Approximate Cleanup Details}
\label{app:boundary-cleanup-details}
For non-periodic cavity, tube, dam, and cylinder flow, exact spectral projection is unavailable. We therefore use training-matched boundary-preserving approximate cleanup operators. The resulting Poisson solves are approximate, boundary-constrained members of the classical Chorin--Temam projection family rather than exact Leray projectors \citep{brown2001accurate,guermond2006overview,kim1985fractionalstep,bell1989secondorderprojection,perot1993fractionalstep}. Let $\tilde{\uu}$ be the raw predicted velocity. Throughout this discrete cleanup description, we use unit grid spacing; with physical spacings $\Delta x,\Delta y$, the corresponding difference quotients include the usual $1/\Delta x$ and $1/\Delta y$ factors. The interior discrete divergence $D_h\equiv \divergence_h$ is
\begin{equation}
(\divergence_h \tilde{\uu})_{i,j}
= \tilde{u}_{i,j} - \tilde{u}_{i,j-1}
+ \tilde{v}_{i,j} - \tilde{v}_{i-1,j}.
\end{equation}
We solve the positive Dirichlet pressure system on interior cells,
\begin{equation}
\begin{aligned}
A p &= - \divergence_h \tilde{\uu},\\
A &:= -\Delta_h = 4I - S_x^+ - S_x^- - S_y^+ - S_y^-,
\end{aligned}
\end{equation}
where the pressure has zero-Dirichlet boundary values and $S_x^{\pm}$ and $S_y^{\pm}$ are interior shift operators under this boundary convention. The interior velocities are then updated by forward differences, with boundary values copied from the raw prediction:
\begin{align}
\hat{u}_{i,j} &= \tilde{u}_{i,j} - (p_{i,j+1}-p_{i,j}),\\
\hat{v}_{i,j} &= \tilde{v}_{i,j} - (p_{i+1,j}-p_{i,j}),
\end{align}
Equivalently, with $G_{\mathrm{int}}$ denoting this interior-only forward-gradient update, the velocity map is $\hat{\uu}=\tilde{\uu}-G_{\mathrm{int}}p$. Because the boundary velocities are copied, $D_h G_{\mathrm{int}}\neq -A$ on divergence stencils that touch copied boundary velocity values; even an exact solve of $A p=-D_h\tilde{\uu}$ need not make the full interior divergence exactly zero. We denote the resulting operator by $\aproxproj$. Despite the $\widetilde{\Pi}$ notation, these boundary-aware maps are cleanup operators rather than exact projections in the algebraic sense. Copying raw boundary values generally breaks both exact divergence annihilation and idempotence on the full velocity field.

\paragraph{Boundary-aware Poisson--Hodge family.}
We study two approximate operator families. The first is the boundary-aware Poisson--Hodge family. It includes a fixed-budget Jacobi iteration, a geometric multigrid V-cycle that we abbreviate as MG and equip with bilinear restriction, prolongation, and Jacobi smoothing, a red-black successive over-relaxation iteration that we abbreviate as SOR and pair with a finite-budget damped relaxation factor, and a matrix-free conjugate-gradient solve that we abbreviate as CG for this same positive Dirichlet system,
\begin{equation}
A p = - \divergence_h \tilde{\uu}.
\end{equation}
The fifth member of this family is an exact discrete-Dirichlet solve for this same pressure operator, implemented by diagonalizing $A$ in the separable sine basis. Writing the interior pressure as $P \in \R^{(H-2)\times(W-2)}$ and the orthonormal type-I discrete sine transform (DST-I) matrices as $S_{H-2}\in\R^{(H-2)\times(H-2)}$ and $S_{W-2}\in\R^{(W-2)\times(W-2)}$, we solve
\begin{align}
\widehat{R} &= -S_{H-2} (\divergence_h \tilde{\uu})_{\mathrm{int}} S_{W-2}^\top,\\
\widehat{P}_{ij} &= \frac{\widehat{R}_{ij}}{\lambda_i^{(H)} + \lambda_j^{(W)}},\\
P &= S_{H-2}^\top \widehat{P} S_{W-2},
\end{align}
where $\lambda_i^{(H)}$ and $\lambda_j^{(W)}$ are the one-dimensional Dirichlet eigenvalues for $i=1,\dots,H-2$ and $j=1,\dots,W-2$. We refer to the resulting direct cleanup operator as Direct. Jacobi, MG, SOR, CG, and Direct therefore share the same boundary-aware discrete Poisson operator but differ in how aggressively they solve it, letting us separate solver under-convergence from operator mismatch in the approximate regime.

\paragraph{Screened family.}
The second family is a screened variant related to the Helmholtz--Hodge cleanup perspective of \citet{bhatia2013helmholtzhodge}. Instead of solving the pure Poisson system, we solve
\begin{equation}
(A + \lambda I) p = - \divergence_h \tilde{\uu},
\label{eq:screened_hodge}
\end{equation}
with the same boundary-aware stencil and interior-only update $\hat{\uu} = \tilde{\uu} - G_{\mathrm{int}}p$. Equation~\ref{eq:screened_hodge} shrinks the pressure response as the screened shift $\lambda$ increases, tending to make the cleanup milder and, in our sweeps, trading divergence removal against lower target distortion. In the main training-matched comparisons, we use a fixed screened shift $\lambda=8$ and the same training and evaluation iteration budgets as the Jacobi family, so this comparison isolates operator family rather than compute budget. Separately, we run evaluator-side post hoc sweeps over $\lambda\in\{1,2,4,8,16,32,64\}$ on the raw CFDBench checkpoints to test whether the family-level conclusion depends on a single screened setting, and we also run a validation-selected retraining study that chooses $\lambda$ on held-out rollout mean squared error (MSE) area under the curve (AUC), abbreviated MSE-AUC, before retraining the screened projected baseline on the official CFDBench splits.
The following subsections report the dense screened-shift sweep, controlled-mismatch details, boundary-strip audit, and tapered and geometry-aware screened (GeoScreened) evaluator-side variants used in additional approximate-regime stress tests.

\subsection{Dense Screened-Shift Sweep}
\label{app:dense-screened-sweep}
The dense screened-family sweep already suggested that the useful operators are the ones that stay closest to the identity map on the hardest rows. Table~\ref{tab:screenedtrade} shows the pattern cleanly. Tube prefers strong cleanup, cavity sits on a broad intermediate plateau, and the hardest dam and cylinder rows move back toward the raw frontier as screening weakens toward the identity, with Raw still remaining slightly best overall. The table therefore reports the best screened shift among $\lambda>0$ together with the separate raw reference.

\begin{table}[t!]
\centering
\caption{Dense screened-shift sweep. As operator mismatch grows, the best screened cleanup shifts toward larger $\lambda$ and behavior closer to the identity, while the raw reference can still remain best overall on the hardest rows. Smaller values are better in the metric columns.}
\label{tab:screenedtrade}
\resizebox{\linewidth}{!}{%
\begin{tabular}{lcccc}
\toprule
Dataset & Best screened shift with $\lambda>0$ & Raw rollout MSE@20 & Best screened PostHoc rollout MSE@20 & GT distortion MSE\\
\midrule
CFDBench cavity & 16 & $(2.190 \pm 0.170)\times 10^{-1}$ & $(2.160 \pm 0.160)\times 10^{-1}$ & $1.223\times 10^{-4}$\\
CFDBench tube & 1 & $(3.200 \pm 0.900)\times 10^{-2}$ & $(2.900 \pm 0.091)\times 10^{-2}$ & $1.280\times 10^{-4}$\\
CFDBench dam & 64 & $(2.476 \pm 0.118)\times 10^{-4}$ & $(2.499 \pm 0.108)\times 10^{-4}$ & $7.469\times 10^{-7}$\\
CFDBench cylinder & 64 & $(2.000 \pm 0.863)\times 10^{-3}$ & $(2.000 \pm 0.894)\times 10^{-3}$ & $3.430\times 10^{-6}$\\
\bottomrule
\end{tabular}
}
\end{table}

\subsection{Controlled Mismatch Details}
\label{app:controlled-mismatch-details}

Figure~\ref{fig:controlledmismatch} shows the full controlled-mismatch curves corresponding to Table~\ref{tab:controlledmismatch}. The curves make the trade-off behind the table explicit. On Dam@60, both cleanup families move almost immediately away from the raw optimum as $\alpha$ increases. Final-step rollout MSE rises while the cleaned state becomes more divergence-suppressed. Cylinder@60 shows the same pattern for Screened16, and the GeoScreened+Taper2 exception is deliberately small, with only a shallow near-identity minimum at $\alpha=0.1$ before the curve turns upward. Thus, the controlled blend is not selecting a hidden stronger projection; it is exposing that the approximate cleanup direction is mostly misaligned with the rollout-error direction on these long-horizon non-periodic rows. This controlled-mismatch pattern is the visual counterpart of Proposition~\ref{prop:controlled_mismatch}. Once the cleanup increment has the wrong or only weakly favorable alignment with the target residual, increasing enforcement strength can monotonically improve the measured constraint residual while worsening the forecasting objective.

\begin{figure}[t!]
\centering
\includegraphics[width=\textwidth]{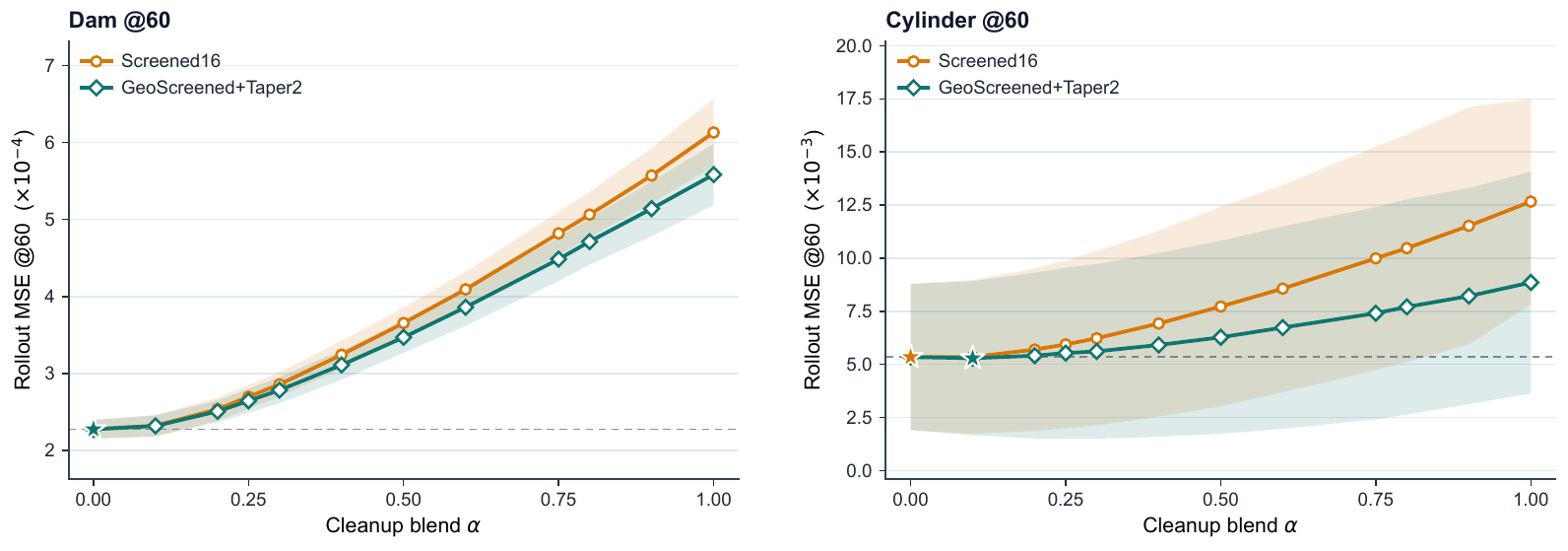}
\caption{Controlled mismatch curves for CFDBench Dam@60 and Cylinder@60, namely the 60-step dam and cylinder long-horizon rows. Increasing cleanup strength lowers divergence but usually worsens rollout error once the operator is mismatched to the data manifold.}
\label{fig:controlledmismatch}
\end{figure}

\subsection{Geometry-Aware Evaluator-Side Variants}
\paragraph{Boundary taper.}
The boundary-strip diagnostics in Table~\ref{tab:boundarydiag} suggest a third, explicitly geometry-aware evaluator-side variant. Let $\mathcal{P}_{\lambda}$ denote the validation-selected screened cleanup, and let $d(i,j)$ be the distance from cell $(i,j)$ to the domain boundary in grid cells. We define a boundary-taper mask
\begin{equation}
M_w(i,j) = \mathrm{clip}((d(i,j)-1)/w, 0, 1), \text{ for } w>0,
\end{equation}
and apply the tapered cleanup
\begin{equation}
\hat{\uu}_{\mathrm{taper}}
=
\tilde{\uu}
+ M_w \odot (\mathcal{P}_{\lambda}(\tilde{\uu}) - \tilde{\uu}).
\end{equation}
In the main tapered evaluation, we use the validation-selected screened shift together with $w=1$, so the first interior ring is left uncorrected, and deeper interior cells retain the screened update. This intentionally weak, evaluation-only taper tests whether the measured boundary-strip mismatch can be turned into a practical accuracy gain on the hardest non-periodic cases.

\begin{table}[t!]
\centering
\caption{Boundary-strip diagnostics on held-out targets under a boundary-focused audit. Stronger cleanup suppresses core divergence more aggressively, but it can also concentrate mismatch near the boundary-adjacent strip, motivating tapered and geometry-aware variants. The distortion column is measured in this strip-audit protocol and is therefore not intended to numerically match the global projector-distortion audit in Table~\ref{tab:projectordiag}. Smaller values are better, except for the strip-to-core ratio, where larger values indicate stronger boundary concentration.}
\label{tab:boundarydiag}
\resizebox{\linewidth}{!}{%
\begin{tabular}{lcccc}
\toprule
Operator regime & Boundary-audit distortion MSE & Boundary-strip div RMS & Core div RMS & Strip-to-core ratio\\
\midrule
Cavity Jacobi-10 & $(7.167 \pm 9.759)\times 10^{-3}$ & $0.122 \pm 0.125$ & $(3.400 \pm 3.400)\times 10^{-2}$ & $(3.600 \pm 0.500)\times 10^{0}$\\
Cavity Direct & $(2.700 \pm 5.500)\times 10^{-2}$ & $0.170 \pm 0.209$ & $(8.139 \pm 9.159)\times 10^{-7}$ & $(2.059 \pm 0.238)\times 10^{5}$\\
Cavity Screened8-10 & $(2.352 \pm 3.041)\times 10^{-4}$ & $0.179 \pm 0.106$ & $(1.190 \pm 0.660)\times 10^{-1}$ & $(1.500 \pm 0.200)\times 10^{0}$\\
Tube Jacobi-20 & $(6.098 \pm 5.333)\times 10^{-4}$ & $0.123 \pm 0.080$ & $(1.600 \pm 1.100)\times 10^{-2}$ & $(1.190 \pm 0.810)\times 10^{1}$\\
Tube Direct & $(1.400 \pm 2.000)\times 10^{-2}$ & $0.210 \pm 0.105$ & $(1.633 \pm 1.407)\times 10^{-6}$ & $(1.848 \pm 0.867)\times 10^{5}$\\
Tube Screened8-20 & $(1.207 \pm 0.801)\times 10^{-5}$ & $0.145 \pm 0.088$ & $(2.600 \pm 1.600)\times 10^{-2}$ & $(7.700 \pm 5.700)\times 10^{0}$\\
Dam Jacobi-20 & $(2.154 \pm 0.644)\times 10^{-3}$ & $0.092 \pm 0.010$ & $(2.300 \pm 0.600)\times 10^{-2}$ & $(4.100 \pm 0.700)\times 10^{0}$\\
Dam Direct & $(1.000 \pm 0.700)\times 10^{-2}$ & $0.178 \pm 0.027$ & $(8.473 \pm 3.723)\times 10^{-7}$ & $(2.282 \pm 0.483)\times 10^{5}$\\
Dam Screened8-20 & $(2.768 \pm 0.790)\times 10^{-5}$ & $0.069 \pm 0.013$ & $(5.900 \pm 0.900)\times 10^{-2}$ & $(1.200 \pm 0.200)\times 10^{0}$\\
Cylinder Jacobi-20 & $(1.600 \pm 2.600)\times 10^{-2}$ & $0.365 \pm 0.250$ & $(7.100 \pm 3.300)\times 10^{-2}$ & $(5.000 \pm 1.600)\times 10^{0}$\\
Cylinder Direct & $(1.000 \pm 1.320)\times 10^{-1}$ & $0.548 \pm 0.309$ & $(2.086 \pm 0.962)\times 10^{-6}$ & $(2.710 \pm 0.860)\times 10^{5}$\\
Cylinder Screened8-20 & $(1.319 \pm 2.676)\times 10^{-4}$ & $0.335 \pm 0.270$ & $(1.340 \pm 0.750)\times 10^{-1}$ & $(2.400 \pm 0.600)\times 10^{0}$\\
\bottomrule
\end{tabular}
}
\end{table}

\paragraph{GeoScreened family.}
The same diagnosis suggests a stronger geometry-aware family that weakens the cleanup within the solve instead of only after the solve. Using the same distance mask, we define a spatially varying screened shift
\begin{equation}
\lambda_w(i,j)
=
\lambda_{\mathrm{bdry}}
+ (\lambda_{\mathrm{core}}-\lambda_{\mathrm{bdry}}) M_w(i,j),
\text{ where } \lambda_{\mathrm{bdry}} \ge \lambda_{\mathrm{core}} \ge 0,
\end{equation}
and solve
\begin{equation}
\begin{aligned}
(A + \Lambda_w) p &= - \divergence_h \tilde{\uu},\\
(\Lambda_w p)_{i,j} &= \lambda_w(i,j) p_{i,j}.
\end{aligned}
\label{eq:geoscreened}
\end{equation}
We refer to the resulting cleanup as the GeoScreened family with boundary shift $\lambda_{\mathrm{bdry}}$, core shift $\lambda_{\mathrm{core}}$, and taper width $w$. When $\lambda_{\mathrm{bdry}}=\lambda_{\mathrm{core}}$, Equation~\ref{eq:geoscreened} reduces to the global screened family. We select these three parameters on validation rollouts and then optionally compose the selected GeoScreened operator with the same two-cell post hoc taper.

\subsection{Qualitative Mechanism Diagnostics}
\label{app:approx-qual}

Figure~\ref{fig:qual-solver-family-mechanism} shows the same residual and distortion failure mode on an actual held-out cylinder wake snapshot. The milder screened family leaves more divergence but preserves the recirculation structure and stays near the raw error level, whereas Jacobi and Direct reduce divergence more aggressively while overcorrecting the wake and spreading much larger velocity error.

\begin{figure}[t!]
\centering
\includegraphics[width=\textwidth]{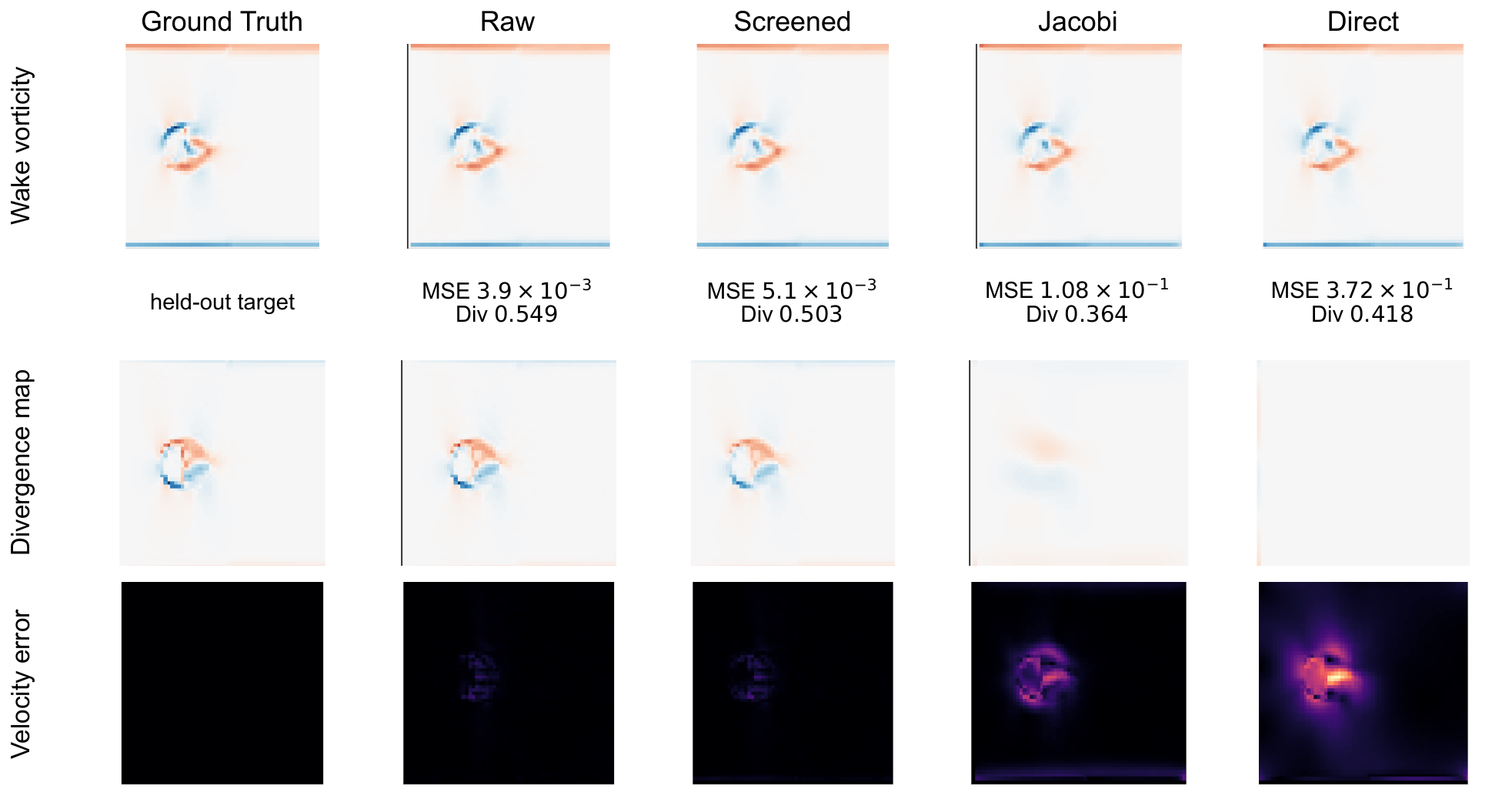}
\caption{
Spatial solver-family mechanism board for a held-out CFDBench cylinder wake snapshot, organized to make the stronger-solver negative control visible. Columns compare Raw, Screened, Jacobi, and Direct against the ground truth; rows show wake vorticity, divergence, and velocity error. The useful qualitative contrast is not only between Raw and Direct. Screened leaves substantially more divergence than Jacobi or Direct, yet it preserves the recirculation structure and stays near the raw error level. Jacobi and Direct both reduce divergence relative to Raw, but both visibly overcorrect the wake and spread much larger error. The solver-family board is the spatial companion to Proposition~\ref{prop:residual_controls_solver}. On the approximate cylinder row, solving the Dirichlet family more faithfully does not recover the right operating point, whereas the milder screened family stays closer to the benchmark manifold.
}
\label{fig:qual-solver-family-mechanism}
\end{figure}

Figures~\ref{fig:qual-case-response-atlas} and \ref{fig:qual-cylinder-alignment-flip} turn the aggregate pattern into case-level diagnostics. The response atlas shows that the approximate-regime split appears in the direction each real trajectory moves in the error--divergence plane, while the paired cylinder comparison shows that the same problem family can flip from cases where Raw is best to cases where mild cleanup is best when the evaluation manifold changes.

\begin{figure}[t!]
\centering
\makebox[\textwidth][c]{\includegraphics[width=1.06\textwidth]{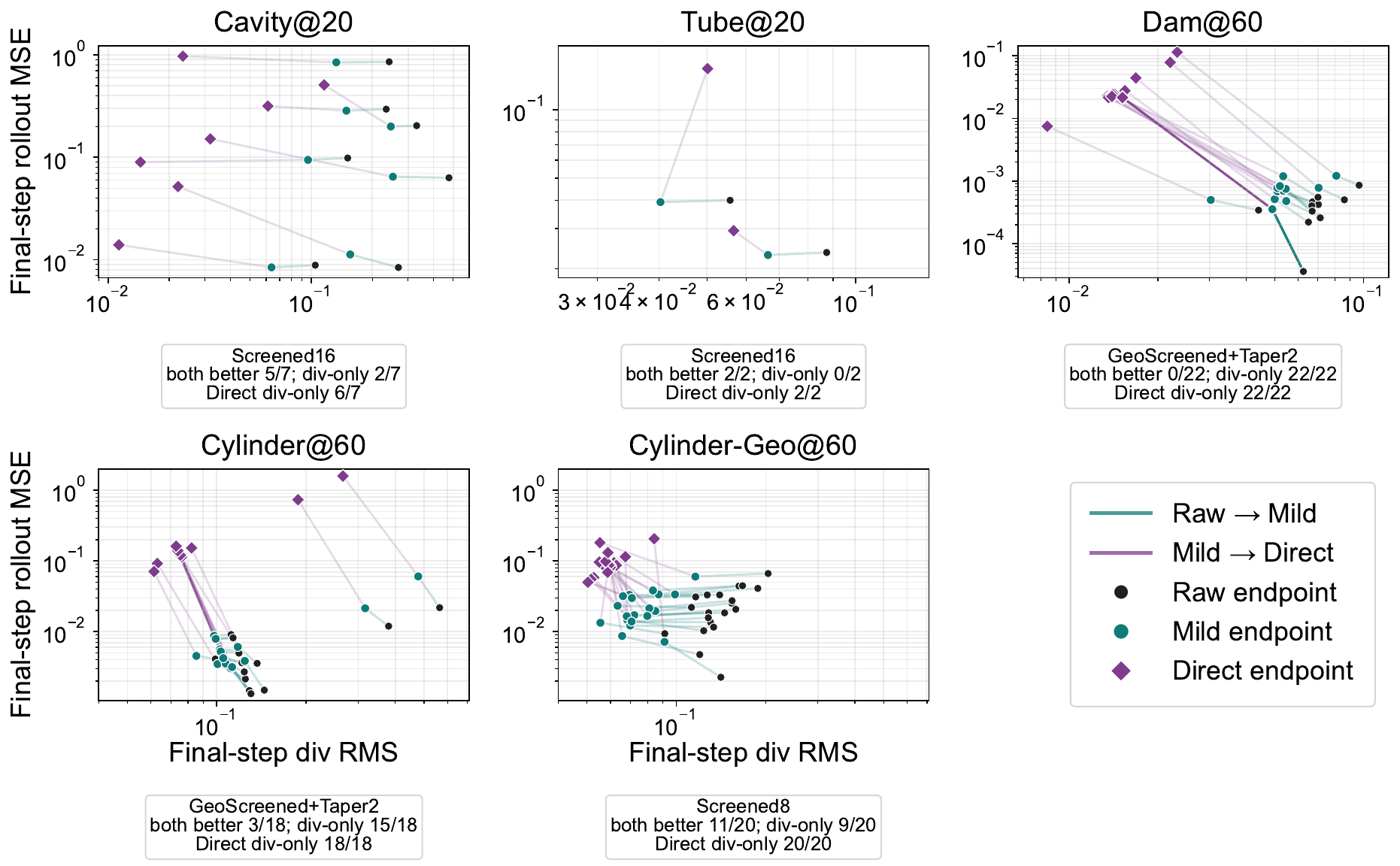}}
\caption{
Case-level response atlas in the final-step error--divergence plane, averaged over seeds 42, 123, and 456. Each polyline is one actual trajectory traced from its Raw endpoint to the dataset-tuned mild operator and then to Direct. The callouts keep the count breakdown compact. Cases improving both metrics have lower final-step MSE and lower divergence relative to Raw, while divergence-only improvements reduce divergence but increase MSE. The aggregate pattern is the point. Tube@20 moves to the corner where both metrics improve under mild cleanup, official-split Dam@60 and Cylinder@60 mostly show divergence-only improvements, and Cylinder-Geo@60 flips back toward improvements in both metrics before Direct turns all cases upward in error. This figure therefore makes our main approximate-regime claim visible on a full set of real cases.
}
\label{fig:qual-case-response-atlas}
\end{figure}

\begin{figure}[t!]
\centering
\includegraphics[width=\textwidth]{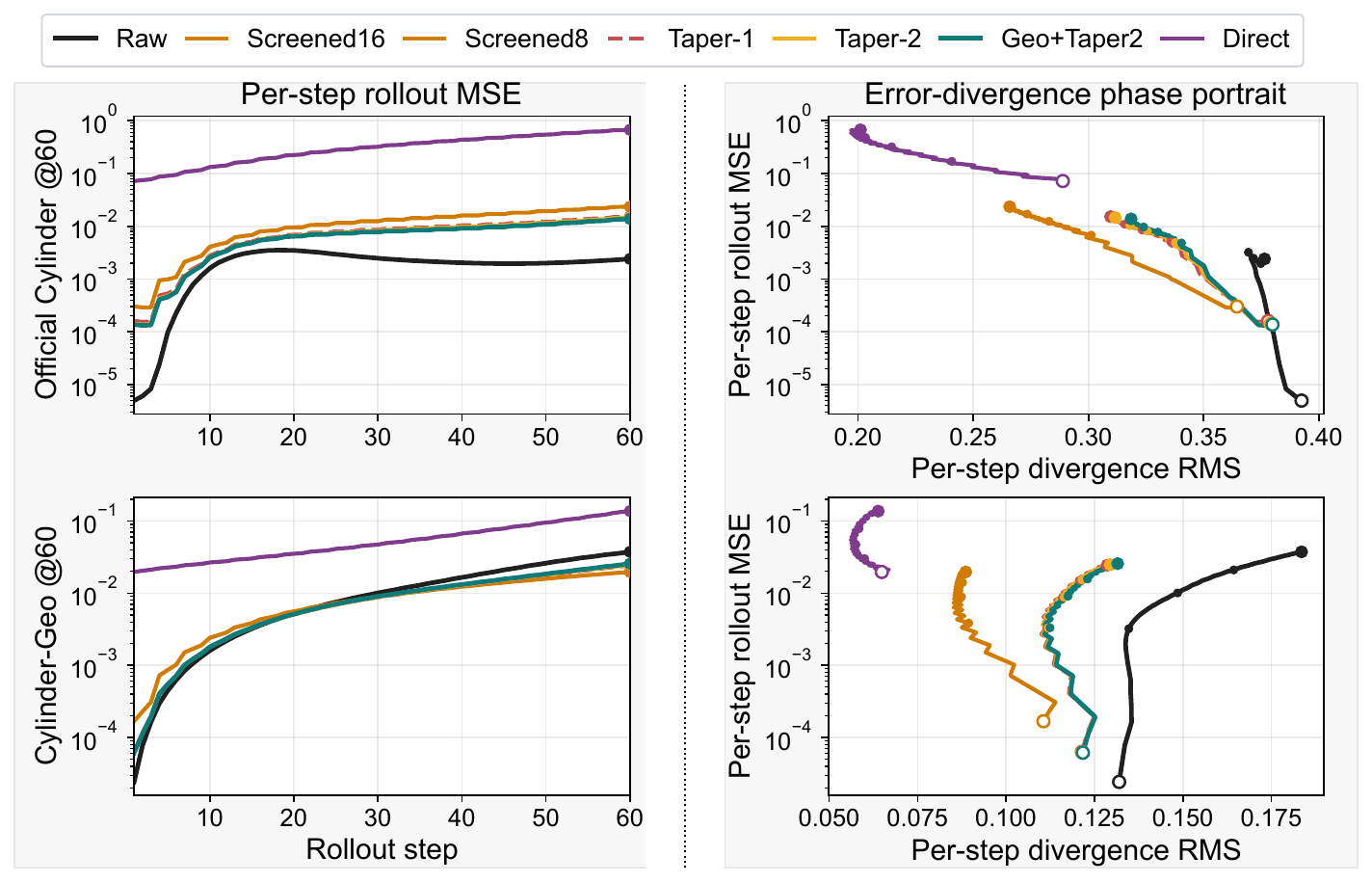}
\caption{
Within-benchmark regime comparison on cylinder flow from the shared seed-42 artifacts. Top row: official-split Cylinder@60 trajectory 1 with the screened curve labeled Screened16. Bottom row: geometry-holdout Cylinder-Geo@60 trajectory 17 with the screened curve labeled Screened8. Left: per-step rollout MSE. Right: the same trajectories traced through the error--divergence plane. In the official-split case, stronger cleanup moves the rollout toward lower divergence but sharply higher error. The Screened16, Taper2, GeoScreened+Taper2, and Direct endpoints all sit above Raw in final-step MSE. On the geometry-holdout case, the ordering flips. Screened8 improves over Raw, Taper2 and GeoScreened+Taper2 remain nearby, and Direct still overcleans. This paired figure turns our main claim into a within-benchmark qualitative comparison. Cleanup strength alone does not determine success; operator alignment with the evaluation manifold does.
}
\label{fig:qual-cylinder-alignment-flip}
\end{figure}

\subsection{Constructive Rule in the Approximate Regime}
\label{app:approx-constructive-rule}
\paragraph{Validation protocol.}
In this paper, validating approximate cleanup strength means selecting both a cleanup family and a cleanup strength based on held-out autoregressive rollout accuracy, not choosing the operator with the smallest constraint residual. Each candidate operator should first be audited on held-out targets in the original physical units, measuring target-distortion MSE $\|\widetilde{\Pi}(y)-y\|_2^2$ together with divergence or coherence reduction. This audit is necessary because a low linear-system residual can still correspond to a mismatched operator that moves benchmark-valid targets.

Raw should remain an explicit deployment candidate, and the comparison should include a strong-solver anchor such as Direct. Raw tests whether cleanup is needed at all, while Direct tests whether solving the nominal constraint harder is actually beneficial. When exact-on-target operators are available, they should be benchmarked before approximate alternatives. When only approximate cleanup is available, the validation menu should span a small family of strengths, including mild or screened operators, blended operators, and adaptive gates.

Cleanup hyperparameters should then be tuned only on validation rollouts under the same horizon and metric used for deployment, such as final-horizon rollout MSE or rollout MSE-AUC, rather than by divergence, Poisson residual, or one-step error alone. The selected test rule should be reported together with Raw and strong-cleanup anchors, including both rollout error and constraint violation, so that the result shows whether cleanup improves accuracy, only improves feasibility, or trades one for the other. A raw or near-identity selection is therefore a valid approximate-regime outcome. It indicates that the available cleanup direction is not aligned well enough with the benchmark manifold at that horizon.

\paragraph{External U-Net validation.}
The same evidence also yields a constructive rule in the approximate regime. Cleanup should not be abandoned outright; family and strength should be chosen explicitly. The screened and GeoScreened families are the best post hoc operating points in our study because they weaken cleanup in a controlled way. This operating-point advantage is visible in the external U-Net validation results in Table~\ref{tab:unetexternal}. On cavity and tube, mild screened cleanup stays on or near the raw frontier, while Jacobi and Direct overclean. Table~\ref{tab:unetexternal} shows that, on Dam@60 and Cylinder@60, GeoScreened+Taper2 becomes the strongest cleanup-only point, reaching $(6.570 \pm 0.850)\times 10^{-4}$ and $(3.800 \pm 0.270)\times 10^{-3}$, respectively, far below Jacobi and Direct.

\begin{table}[t!]
\centering
\caption{Frozen U-Net external validation on the official CFDBench suite. Each cell reports final-horizon rollout MSE on the first line and final-horizon rollout Div on the second line. The upper block reports raw, mild screened cleanup, and GeoScreened cleanup, while the lower block reports stronger-cleanup anchors. All non-raw variants are post hoc cleanup operators.}
\label{tab:unetexternal}
{\small
\resizebox{\linewidth}{!}{%
\begin{tabular}{lccccc}
\toprule
\multicolumn{6}{@{}l}{\textbf{Raw and mild cleanup variants}}\\
\cmidrule(lr){1-6}
Dataset & Raw & Screened16 & Screened16+Taper2 & GeoScreened & GeoScreened+Taper2\\
\midrule
\multicolumn{6}{@{}l}{\textbf{Official suite}}\\
Cavity & \mseDivCell{(2.910 \pm 0.370)\times 10^{-1}}{(2.010 \pm 0.076)\times 10^{-1}} & \mseDivCell{(2.910 \pm 0.370)\times 10^{-1}}{(1.240 \pm 0.032)\times 10^{-1}} & \mseDivCell{(2.910 \pm 0.370)\times 10^{-1}}{(1.400 \pm 0.039)\times 10^{-1}} & \mseDivCell{(2.910 \pm 0.370)\times 10^{-1}}{(1.270 \pm 0.032)\times 10^{-1}} & \mseDivCell{(2.910 \pm 0.370)\times 10^{-1}}{(1.410 \pm 0.040)\times 10^{-1}}\\
Tube & \mseDivCell{(2.100 \pm 0.360)\times 10^{-2}}{(6.200 \pm 0.220)\times 10^{-2}} & \mseDivCell{(2.000 \pm 0.370)\times 10^{-2}}{(5.100 \pm 0.230)\times 10^{-2}} & \mseDivCell{(2.000 \pm 0.360)\times 10^{-2}}{(5.900 \pm 0.200)\times 10^{-2}} & \mseDivCell{(2.000 \pm 0.360)\times 10^{-2}}{(5.300 \pm 0.220)\times 10^{-2}} & \mseDivCell{(2.000 \pm 0.360)\times 10^{-2}}{(6.000 \pm 0.190)\times 10^{-2}}\\
Dam & \mseDivCell{(3.330 \pm 0.200)\times 10^{-4}}{(6.700 \pm 0.042)\times 10^{-2}} & \mseDivCell{(4.280 \pm 0.270)\times 10^{-4}}{(5.200 \pm 0.067)\times 10^{-2}} & \mseDivCell{(4.150 \pm 0.280)\times 10^{-4}}{(5.400 \pm 0.074)\times 10^{-2}} & \mseDivCell{(4.190 \pm 0.280)\times 10^{-4}}{(5.300 \pm 0.070)\times 10^{-2}} & \mseDivCell{(4.150 \pm 0.280)\times 10^{-4}}{(5.500 \pm 0.074)\times 10^{-2}}\\
Cylinder & \mseDivCell{(2.260 \pm 0.180)\times 10^{-3}}{(1.720 \pm 0.009)\times 10^{-1}} & \mseDivCell{(3.040 \pm 0.370)\times 10^{-3}}{(1.320 \pm 0.002)\times 10^{-1}} & \mseDivCell{(2.340 \pm 0.390)\times 10^{-3}}{(1.500 \pm 0.006)\times 10^{-1}} & \mseDivCell{(2.410 \pm 0.400)\times 10^{-3}}{(1.440 \pm 0.005)\times 10^{-1}} & \mseDivCell{(2.310 \pm 0.400)\times 10^{-3}}{(1.520 \pm 0.006)\times 10^{-1}}\\
\addlinespace[2pt]
\midrule
\multicolumn{6}{@{}l}{\textbf{Long-horizon stress rows}}\\
Dam@60 & \mseDivCell{(2.590 \pm 0.320)\times 10^{-4}}{(6.600 \pm 0.075)\times 10^{-2}} & \mseDivCell{(7.180 \pm 0.970)\times 10^{-4}}{(4.200 \pm 0.076)\times 10^{-2}} & \mseDivCell{(6.660 \pm 0.880)\times 10^{-4}}{(4.500 \pm 0.090)\times 10^{-2}} & \mseDivCell{(6.840 \pm 0.890)\times 10^{-4}}{(4.400 \pm 0.081)\times 10^{-2}} & \mseDivCell{(6.570 \pm 0.850)\times 10^{-4}}{(4.600 \pm 0.086)\times 10^{-2}}\\
Cylinder@60 & \mseDivCell{(2.620 \pm 0.081)\times 10^{-3}}{(1.720 \pm 0.014)\times 10^{-1}} & \mseDivCell{(6.140 \pm 0.090)\times 10^{-3}}{(1.170 \pm 0.006)\times 10^{-1}} & \mseDivCell{(4.060 \pm 0.220)\times 10^{-3}}{(1.400 \pm 0.008)\times 10^{-1}} & \mseDivCell{(4.200 \pm 0.250)\times 10^{-3}}{(1.310 \pm 0.003)\times 10^{-1}} & \mseDivCell{(3.800 \pm 0.270)\times 10^{-3}}{(1.440 \pm 0.009)\times 10^{-1}}\\
\bottomrule
\end{tabular}
}

\resizebox{0.78\linewidth}{!}{%
\begin{tabular}{lccc}
\toprule
\multicolumn{4}{@{}l}{\textbf{Strong-cleanup anchors}}\\
\cmidrule(lr){1-4}
Dataset & Raw & Jacobi & Direct\\
\midrule
\multicolumn{4}{@{}l}{\textbf{Official suite}}\\
Cavity & \mseDivCell{(2.910 \pm 0.370)\times 10^{-1}}{(2.010 \pm 0.076)\times 10^{-1}} & \mseDivCell{(3.190 \pm 0.380)\times 10^{-1}}{(3.800 \pm 0.380)\times 10^{-2}} & \mseDivCell{(3.420 \pm 0.400)\times 10^{-1}}{(3.700 \pm 0.710)\times 10^{-2}}\\
Tube & \mseDivCell{(2.100 \pm 0.360)\times 10^{-2}}{(6.200 \pm 0.220)\times 10^{-2}} & \mseDivCell{(4.000 \pm 0.730)\times 10^{-2}}{(2.400 \pm 0.350)\times 10^{-2}} & \mseDivCell{(1.070 \pm 0.190)\times 10^{-1}}{(4.900 \pm 0.600)\times 10^{-2}}\\
Dam & \mseDivCell{(3.330 \pm 0.200)\times 10^{-4}}{(6.700 \pm 0.042)\times 10^{-2}} & \mseDivCell{(7.460 \pm 0.150)\times 10^{-3}}{(2.000 \pm 0.016)\times 10^{-2}} & \mseDivCell{(1.400 \pm 0.025)\times 10^{-2}}{(2.600 \pm 0.018)\times 10^{-2}}\\
Cylinder & \mseDivCell{(2.260 \pm 0.180)\times 10^{-3}}{(1.720 \pm 0.009)\times 10^{-1}} & \mseDivCell{(5.600 \pm 0.210)\times 10^{-2}}{(7.000 \pm 0.059)\times 10^{-2}} & \mseDivCell{(7.500 \pm 0.180)\times 10^{-2}}{(1.130 \pm 0.044)\times 10^{-1}}\\
\midrule
\addlinespace[2pt]
\multicolumn{4}{@{}l}{\textbf{Long-horizon stress rows}}\\
Dam@60 & \mseDivCell{(2.590 \pm 0.320)\times 10^{-4}}{(6.600 \pm 0.075)\times 10^{-2}} & \mseDivCell{(1.200 \pm 0.064)\times 10^{-2}}{(1.100 \pm 0.004)\times 10^{-2}} & \mseDivCell{(2.000 \pm 0.100)\times 10^{-2}}{(1.300 \pm 0.098)\times 10^{-2}}\\
Cylinder@60 & \mseDivCell{(2.620 \pm 0.081)\times 10^{-3}}{(1.720 \pm 0.014)\times 10^{-1}} & \mseDivCell{(1.710 \pm 0.210)\times 10^{-1}}{(5.100 \pm 0.270)\times 10^{-2}} & \mseDivCell{(2.200 \pm 0.018)\times 10^{-1}}{(9.300 \pm 1.100)\times 10^{-2}}\\
\bottomrule
\end{tabular}
}
}
\end{table}

The fairness objection is still important because approximate-regime post hoc gains could reflect a train--test mismatch. The tuned cylinder rows in Table~\ref{tab:cost} show that this concern is only partially valid. On the short-horizon tuned cylinder row, matched projected screened training can indeed beat Raw-FNO, reaching $(1.409 \pm 0.028)\times 10^{-3}$ compared with $(1.642 \pm 0.863)\times 10^{-3}$. However, this tuned-cylinder win is a bounded exception, not a reversal of the main claim. On the harder long-horizon rows for dam and cylinder, the best operating point still stays with raw or very mild post hoc cleanup. Our approximate-regime recommendation is therefore specific. Compare operator families in velocity space, validate cleanup strength explicitly, and expect the winning operator to be much milder than the strongest physically motivated solve.

A final protocol check concerns coordinate-frame choice. Because the non-periodic cleanup operator is implemented in physical velocity units, one could worry that the approximate-regime ordering is really a coordinate-frame artifact. A dedicated normalized and physical frame ablation on post hoc Jacobi and Screened8 shows otherwise. Across cavity, tube, dam, and cylinder, the largest relative change in final-horizon rollout MSE is only $0.14\%$ on tube under Screened8, and most rows move by less than $0.06\%$. We therefore use physical-space cleanup for conceptual consistency, but the main regime split between exact and approximate operators is not driven by this implementation choice.

\section{Hierarchical Forecasting Support}
\label{app:hierarchical-support}
\subsection{Task Setup}
\label{sec:hierarchical-support-task}
As a compact real-data support study outside CFD, we consider autoregressive hierarchical forecasting and test whether the same exact-versus-approximate operator split appears beyond fluid benchmarks. The hierarchical forecasting setup connects our operator-exactness theme to classical, probabilistic, and end-to-end reconciliation settings that are already well-studied outside CFD \citep{taieb2017coherent,panagiotelis2021geometric,rangapuram2021end,athanasopoulos2024review}. Let $x_t \in \R^m$ denote the vector of all hierarchy nodes at time $t$, and let $S \in \{0,1\}^{m \times b}$ be the summing matrix that maps bottom-level series $z_t \in \R^b$ to the full hierarchy:
\begin{equation}
\begin{aligned}
x_t &= S z_t,\\
\mathcal{C} &= \{S z : z \in \R^b\}.
\end{aligned}
\end{equation}
The coherent target manifold is therefore the column space $\mathcal{C}$ of $S$. We train a strong autoregressive gated recurrent unit model following \citet{cho2014gru} to forecast all hierarchy levels jointly on the public Traffic, Labour, TourismLarge, OldTourismLarge, and Wiki2 benchmarks as packaged in the HierarchicalForecast collection introduced by \citet{olivares2022hierarchicalforecast}, then compare three operator families on the predicted states.

The exact OLS reconciliation is the orthogonal projector
\begin{equation}
\Pi_{\mathrm{ols}} = S (S^\top S)^{-1} S^\top.
\end{equation}
We also include the exact-but-nonorthogonal bottom-up repair
\begin{equation}
\Pi_{\mathrm{bu}} = S G_{\mathrm{bu}},
\end{equation}
where $G_{\mathrm{bu}}$ extracts the model's bottom-level forecasts from the full hierarchy vector.

The approximate comparator starts from the full-strength top-down operator built from historical mean proportions $\bar{p} \in \R^b$ with $\mathbf{1}^\top\bar{p}=1$. Let $e_{\mathrm{root}} \in \R^m$ denote the selector for the root node, so that $x^{\mathrm{root}} = e_{\mathrm{root}}^\top x$. Then
\begin{equation}
\widetilde{\Pi}_{\mathrm{td}}(x) = S \bar{p} e_{\mathrm{root}}^\top x.
\end{equation}
We also evaluate a blended variant,
\begin{equation}
\widetilde{\Pi}_{\mathrm{td},\alpha}(x) = x + \alpha(\widetilde{\Pi}_{\mathrm{td}}(x) - x),
\end{equation}
with $\alpha$ selected on the validation split.

\begin{proposition}[Exactness of hierarchical reconciliation operators]
\label{prop:hierarchical_exactness}
Assume that $S$ has full column rank, that $G_{\mathrm{bu}}S=I_b$, and that the root row satisfies $e_{\mathrm{root}}^\top S=\mathbf{1}^\top$. Then
\begin{equation}
\Pi_{\mathrm{ols}}x=x \text{ and } \Pi_{\mathrm{bu}}x=x \text{ for every } x\in\mathcal{C}.
\end{equation}
The full-strength top-down operator $\widetilde{\Pi}_{\mathrm{td}}$ maps every input into $\mathcal{C}$, but, for a coherent point $x=Sz$, it satisfies $\widetilde{\Pi}_{\mathrm{td}}(x)=x$ if and only if $z=(\mathbf{1}^\top z)\bar{p}$. Thus, top-down reconciliation is generally not exact on $\mathcal{C}$ and can distort coherent targets.
\end{proposition}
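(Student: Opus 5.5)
The plan is to verify each operator directly on a coherent point $x=Sz\in\mathcal{C}$, using only the three structural assumptions. Corollary~\ref{cor:linear_idempotent_exactness} offers an alternative route for the two exact operators.

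\textbf{OLS operator.} Full column rank of $S$ makes $S^\top S$ invertible, so $\Pi_{\mathrm{ols}}$ is well defined. Then
\[
\Pi_{\mathrm{ols}}Sz = S(S^\top S)^{-1}S^\top S z = Sz .
\]
Equivalently, $\Pi_{\mathrm{ols}}$ is linear and idempotent with range $\mathcal{C}$, and Corollary~\ref{cor:linear_idempotent_exactness} gives exactness.

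\textbf{Bottom-up operator.} The assumption $G_{\mathrm{bu}}S=I_b$ gives
\[
\Pi_{\mathrm{bu}}Sz = S G_{\mathrm{bu}} S z = Sz .
\]
Both operators therefore fix every point of $\mathcal{C}$. This also shows idempotence, since $\Pi_{\mathrm{bu}}^2=SG_{\mathrm{bu}}SG_{\mathrm{bu}}=\Pi_{\mathrm{bu}}$, though the claim does not need it.

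\textbf{Top-down operator.} First, $\widetilde{\Pi}_{\mathrm{td}}(x)=S\bigl(\bar p\, e_{\mathrm{root}}^\top x\bigr)$ is $S$ applied to a vector in $\R^b$, so it lies in $\mathcal{C}$ for every input $x$. For a coherent point $x=Sz$, the root assumption $e_{\mathrm{root}}^\top S=\mathbf{1}^\top$ gives
\[
\widetilde{\Pi}_{\mathrm{td}}(Sz)=S\bar p\,(\mathbf{1}^\top z).
\]
The fixed-point condition $\widetilde{\Pi}_{\mathrm{td}}(x)=x$ thus reads $S\bigl((\mathbf{1}^\top z)\bar p - z\bigr)=0$. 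Full column rank means $S$ is injective, so this holds if and only if $z=(\mathbf{1}^\top z)\bar p$. That is the stated equivalence.

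\textbf{Non-exactness.} It remains to justify "generally not exact." The set of $z$ satisfying the fixed-point condition is the line $\mathrm{span}(\bar p)$. To check this, note that if $z=c\bar p$ then $\mathbf{1}^\top z=c$ because $\mathbf{1}^\top\bar p=1$. So when $b\ge 2$, any $z\notin\mathrm{span}(\bar p)$, for example a standard basis vector not parallel to $\bar p$, gives a coherent target with nonzero distortion $\|\widetilde{\Pi}_{\mathrm{td}}(Sz)-Sz\|>0$. This distortion is exactly the $\beta$ term of Theorem~\ref{thm:target_distortion_rollout}.

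The only step needing care is the top-down "only if" direction, which hinges on injectivity of $S$. Everything else is a one-line substitution.
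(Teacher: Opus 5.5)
Your proposal is correct and follows the paper's proof essentially line for line. It uses direct substitution of $x=Sz$ for $\Pi_{\mathrm{ols}}$ and $\Pi_{\mathrm{bu}}$, the range argument plus $e_{\mathrm{root}}^\top S=\mathbf{1}^\top$ for $\widetilde{\Pi}_{\mathrm{td}}$, and injectivity of $S$ for the fixed-point equivalence. Your extra step, identifying the fixed set as $\mathrm{span}(\bar{p})$ via $\mathbf{1}^\top\bar{p}=1$ and giving an explicit counterexample for $b\ge 2$, slightly sharpens the paper's closing remark that coherent targets whose proportions differ from $\bar{p}$ are moved.
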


\begin{proof}
Let $x=Sz\in\mathcal{C}$. For OLS reconciliation,
\begin{equation}
\Pi_{\mathrm{ols}}x
=
S(S^\top S)^{-1}S^\top S z
=
S z
=
x,
\end{equation}
where full column rank makes $S^\top S$ invertible. For bottom-up reconciliation,
\begin{equation}
\Pi_{\mathrm{bu}}x
=
S G_{\mathrm{bu}} S z
=
S z
=
x,
\end{equation}
using $G_{\mathrm{bu}}S=I_b$.

For any input $x$, $\widetilde{\Pi}_{\mathrm{td}}(x)=S(\bar{p}e_{\mathrm{root}}^\top x)$ lies in $\mathrm{range}(S)=\mathcal{C}$, so the full-strength top-down output is coherent. If $x=Sz$ is already coherent, then
\begin{equation}
\widetilde{\Pi}_{\mathrm{td}}(Sz)
=
S\bar{p}e_{\mathrm{root}}^\top S z
=
S\bar{p}\mathbf{1}^\top z.
\end{equation}
Because $S$ has full column rank, the top-down expression equals $Sz$ exactly when $z=(\mathbf{1}^\top z)\bar{p}$. Coherent targets whose bottom-level proportions differ from $\bar{p}$ are therefore moved by $\widetilde{\Pi}_{\mathrm{td}}$.
\end{proof}

These are operator-level exactness statements with respect to the abstract coherent manifold $\mathcal{C}$; the diagnostics reported below are therefore empirical benchmark measurements rather than literal numerical identity checks in the periodic-FFT sense. For general inputs $x$ and $\alpha<1$, the blended top-down operator no longer guarantees coherence, so it is a validation-tuned interpolation toward top-down rather than an exact repair. All hierarchical operators are applied in original value units and only then renormalized for the model, mirroring the physical-space protocol used in the non-periodic CFD study.

\subsection{Results on Real Data}
Hierarchical forecasting provides a real-data testbed outside fluid forecasting. Table~\ref{tab:hierarchicalsupport} evaluates five public hierarchical forecasting benchmarks. A strong autoregressive GRU predicts all hierarchy levels jointly, and we compare operator-level exact coherent reconciliation given by OLS and bottom-up with a validation-tuned blended top-down family. Proposition~\ref{prop:hierarchical_exactness} gives the exactness proof. For coherent targets $x \in \mathcal{C}$, OLS and bottom-up satisfy $\Pi(x)=x$. The small nonzero OLS and Bottom-Up distortion and coherence values reported in Table~\ref{tab:hierarchicalsupport} are coordinate-transform and finite-precision effects of denormalization and renormalization in the evaluation pipeline rather than counterexamples to exactness. The experiment therefore evaluates benchmark-aligned linear repairs, with the exact families serving as the mathematically aligned baseline and the top-down family as the intentionally inexact comparator.

Table~\ref{tab:hierarchicalsupport} shows that the hierarchical evidence does not reduce to a universal ordering. Across Traffic, Labour, TourismLarge, OldTourismLarge, and Wiki2, exact reconciliation is the most stable family. Bottom-Up or Proj-OLS gives the best exact baseline on every dataset and avoids the catastrophic failures seen for approximate top-down repair. The tuned top-down family is highly dataset-dependent. It worsens Traffic and is catastrophic on Labour, where rollout MSE rises to $(3.740 \pm 0.288)\times 10^{3}$, but it becomes decisively best on Wiki2, reducing rollout MSE from Raw-GRU's $(3.170 \pm 0.251)\times 10^{8}$ to $(1.610 \pm 0.112)\times 10^{8}$. TourismLarge and OldTourismLarge lie between these extremes. Bottom-Up remains best, while tuned top-down improves over Raw-GRU but stays worse than the best exact operator. Thus, the support study reinforces the main claim. Exact benchmark-aligned operators are the mandatory low-risk baseline, while approximate operators must be validated dataset by dataset.

This heterogeneity makes the broader point sharper. Exact coherent reconciliation is the safest cross-dataset baseline. It never catastrophically fails across the five benchmarks, and exact Bottom-Up or Proj-OLS is the best exact baseline on all five of them. The blended top-down family has both the highest downside and, on Wiki2, the highest upside. What transfers is therefore not a blanket claim that top-down is always harmful but the more operational lesson that benchmark-aligned exact operators should be the mandatory baseline, while approximate operators must be treated as a separate family whose strength and even sign of benefit are dataset-dependent. In-loop OLS training remains mixed rather than uniformly dominant, matching our main framing. The durable lesson is about baseline quality and operator exactness, not about a universal rule that training with the operator in the loop is always best outside the clean periodic setting.

\begin{table}[t!]
\centering
\caption{Real-data hierarchical forecasting study on five public hierarchies. The OLS and Bottom-Up reconciliation family is the lowest-variance baseline across datasets, while the tuned blended top-down family ranges from harmful to dominant depending on the benchmark. Smaller values are better in every numeric column.}
\label{tab:hierarchicalsupport}
\resizebox{\linewidth}{!}{%
\begin{tabular}{llcccc}
\toprule
Dataset & Variant & Rollout MSE & Last-step MSE & Constraint RMS & Target distortion \\
\midrule
Traffic & Raw-GRU & $(3.663 \pm 0.374)\times 10^{2}$ & $(4.588 \pm 0.754)\times 10^{2}$ & $(3.695 \pm 0.576)\times 10^{0}$ & $0$ \\
 & PostHoc-OLS & $(3.216 \pm 0.373)\times 10^{2}$ & $(4.097 \pm 1.099)\times 10^{2}$ & $(2.000 \pm 0.013)\times 10^{-2}$ & $(4.450 \pm 0.000)\times 10^{-4}$ \\
 & PostHoc-Bottom-Up & $(3.040 \pm 0.311)\times 10^{2}$ & $(3.696 \pm 0.878)\times 10^{2}$ & $(1.800 \pm 0.007)\times 10^{-2}$ & $(9.250 \pm 0.000)\times 10^{-6}$ \\
 & PostHoc-Top-Down & $(4.037 \pm 0.648)\times 10^{2}$ & $(7.321 \pm 1.770)\times 10^{2}$ & $(1.990 \pm 0.161)\times 10^{0}$ & $(2.240 \pm 0.000)\times 10^{-1}$ \\
 & Proj-OLS & $(3.035 \pm 0.481)\times 10^{2}$ & $(4.010 \pm 1.524)\times 10^{2}$ & $(2.000 \pm 0.011)\times 10^{-2}$ & $(4.450 \pm 0.000)\times 10^{-4}$ \\
\midrule
Labour & Raw-GRU & $(9.343 \pm 2.164)\times 10^{2}$ & $(1.830 \pm 0.554)\times 10^{3}$ & $(8.772 \pm 1.488)\times 10^{0}$ & $0$ \\
 & PostHoc-OLS & $(8.157 \pm 2.223)\times 10^{2}$ & $(1.510 \pm 0.558)\times 10^{3}$ & $(3.180 \pm 0.022)\times 10^{-4}$ & $(4.560 \pm 0.000)\times 10^{-8}$ \\
 & PostHoc-Bottom-Up & $(7.657 \pm 1.475)\times 10^{2}$ & $(1.390 \pm 0.307)\times 10^{3}$ & $(2.050 \pm 0.032)\times 10^{-4}$ & $(7.800 \pm 0.000)\times 10^{-9}$ \\
 & PostHoc-Top-Down & $(3.740 \pm 0.288)\times 10^{3}$ & $(8.380 \pm 0.951)\times 10^{3}$ & $(8.360 \pm 1.855)\times 10^{0}$ & $(1.921 \pm 0.000)\times 10^{2}$ \\
 & Proj-OLS & $(6.007 \pm 0.696)\times 10^{2}$ & $(9.824 \pm 1.099)\times 10^{2}$ & $(3.200 \pm 0.030)\times 10^{-4}$ & $(4.560 \pm 0.000)\times 10^{-8}$ \\
\midrule
TourismLarge & Raw-GRU & $(1.890 \pm 0.111)\times 10^{5}$ & $(2.270 \pm 0.291)\times 10^{5}$ & $(1.115 \pm 0.056)\times 10^{2}$ & $0$ \\
 & PostHoc-OLS & $(2.230 \pm 0.086)\times 10^{5}$ & $(2.830 \pm 0.411)\times 10^{5}$ & $(2.360 \pm 0.038)\times 10^{-1}$ & $(7.800 \pm 0.000)\times 10^{-2}$ \\
 & PostHoc-Bottom-Up & $(1.550 \pm 0.068)\times 10^{5}$ & $(1.660 \pm 0.261)\times 10^{5}$ & $(2.610 \pm 0.033)\times 10^{-1}$ & $(6.600 \pm 0.000)\times 10^{-3}$ \\
 & PostHoc-Top-Down & $(1.710 \pm 0.151)\times 10^{5}$ & $(1.210 \pm 0.298)\times 10^{5}$ & $(8.402 \pm 0.388)\times 10^{1}$ & $(1.084 \pm 0.000)\times 10^{2}$ \\
 & Proj-OLS & $(1.620 \pm 0.060)\times 10^{5}$ & $(1.360 \pm 0.077)\times 10^{5}$ & $(2.350 \pm 0.006)\times 10^{-1}$ & $(7.800 \pm 0.000)\times 10^{-2}$ \\
\midrule
OldTourismLarge & Raw-GRU & $(1.800 \pm 0.086)\times 10^{5}$ & $(2.220 \pm 0.368)\times 10^{5}$ & $(1.115 \pm 0.049)\times 10^{2}$ & $0$ \\
 & PostHoc-OLS & $(2.170 \pm 0.115)\times 10^{5}$ & $(2.920 \pm 0.488)\times 10^{5}$ & $(2.370 \pm 0.042)\times 10^{-1}$ & $(7.800 \pm 0.000)\times 10^{-2}$ \\
 & PostHoc-Bottom-Up & $(1.470 \pm 0.090)\times 10^{5}$ & $(1.570 \pm 0.335)\times 10^{5}$ & $(2.640 \pm 0.046)\times 10^{-1}$ & $(6.600 \pm 0.000)\times 10^{-3}$ \\
 & PostHoc-Top-Down & $(1.640 \pm 0.094)\times 10^{5}$ & $(1.080 \pm 0.271)\times 10^{5}$ & $(8.352 \pm 0.320)\times 10^{1}$ & $(1.084 \pm 0.000)\times 10^{2}$ \\
 & Proj-OLS & $(1.590 \pm 0.032)\times 10^{5}$ & $(1.270 \pm 0.100)\times 10^{5}$ & $(2.310 \pm 0.029)\times 10^{-1}$ & $(7.800 \pm 0.000)\times 10^{-2}$ \\
\midrule
Wiki2 & Raw-GRU & $(3.170 \pm 0.251)\times 10^{8}$ & $(4.320 \pm 0.599)\times 10^{8}$ & $(3.950 \pm 0.479)\times 10^{3}$ & $0$ \\
 & PostHoc-OLS & $(3.030 \pm 0.200)\times 10^{8}$ & $(4.090 \pm 0.516)\times 10^{8}$ & $(2.604 \pm 0.220)\times 10^{0}$ & $(1.329 \pm 0.000)\times 10^{1}$ \\
 & PostHoc-Bottom-Up & $(3.130 \pm 0.201)\times 10^{8}$ & $(4.190 \pm 0.488)\times 10^{8}$ & $(2.404 \pm 0.251)\times 10^{0}$ & $(1.601 \pm 0.000)\times 10^{1}$ \\
 & PostHoc-Top-Down & $(1.610 \pm 0.112)\times 10^{8}$ & $(1.670 \pm 0.224)\times 10^{8}$ & $(3.199 \pm 5.985)\times 10^{1}$ & $(9.330 \pm 1.790)\times 10^{7}$ \\
 & Proj-OLS & $(2.770 \pm 0.031)\times 10^{8}$ & $(3.270 \pm 0.039)\times 10^{8}$ & $(2.885 \pm 0.163)\times 10^{0}$ & $(1.329 \pm 0.000)\times 10^{1}$ \\
\bottomrule
\end{tabular}
}
\end{table}

\section{Limitations}
The strongest evidence is still on 2D velocity benchmarks. The hierarchical forecasting experiments broaden our coverage outside CFD, but they remain compact. We do not claim to have mapped the full structured-prediction design space outside CFD. CFDBench also broadens the non-periodic regime substantially without exhausting it, and richer geometry-aware or mesh-aware families may move the frontier further.

\section{Reproducibility}
\label{app:reproducibility}
Table~\ref{tab:cost} makes the cost structure explicit. On the exact NS-128 FNO track, Raw, SoftDiv, PostHoc, Proj, and CAP all use the same $18.9$M-parameter backbone and nearly identical train hours, so the main exact-regime ordering is not a hidden capacity effect. The tuned cylinder cleanups show the complementary fairness point. AdaBlend-Screened-Tuned and PostHoc-Screened-Tuned reuse the raw checkpoint exactly, whereas Proj-Screened-Tuned raises training time from $0.49$ to $0.64$ hours because it retrains with the projected operator in the loop.

\begin{table}[t!]
\centering
\caption{Cost comparison for the exact-regime and tuned cylinder settings. Evaluation-time cleanup baselines reuse the source checkpoint's training cost, while projected retraining pays its extra cost explicitly. Smaller values are better in every numeric column.}
\label{tab:cost}
\resizebox{\linewidth}{!}{%
\begin{tabular}{llcccc}
\toprule
Track & Variant & Params & Train hours & Latency (ms) & Final-step rollout MSE\\
\midrule
NS-128 exact & Raw-FNO & $18.90$M & $0.84 \pm 0.08$ & $8.75 \pm 0.25$ & $(9.389 \pm 6.291)\times 10^{-5}$\\
NS-128 exact & SoftDiv-FNO & $18.90$M & $0.84 \pm 0.07$ & $8.69 \pm 0.24$ & $(2.361 \pm 1.371)\times 10^{-4}$\\
NS-128 exact & PostHoc-FNO & $18.90$M & $0.84 \pm 0.08$ & $11.39 \pm 0.26$ & $(1.133 \pm 0.165)\times 10^{-6}$\\
NS-128 exact & Proj-FNO & $18.90$M & $0.88 \pm 0.08$ & $11.49 \pm 0.13$ & $(5.370 \pm 0.113)\times 10^{-7}$\\
NS-128 exact & CAP-FNO & $18.90$M & $0.88 \pm 0.08$ & $11.42 \pm 0.08$ & $(5.406 \pm 0.243)\times 10^{-7}$\\
\midrule
Cylinder-20 tuned & Raw-FNO & $6.44$M & $0.49 \pm 0.00$ & $5.50 \pm 0.01$ & $(1.642 \pm 0.863)\times 10^{-3}$\\
Cylinder-20 tuned & AdaBlend-Screened-Tuned & $6.44$M & $0.49 \pm 0.00$ & $11.79 \pm 0.05$ & $(2.540 \pm 1.009)\times 10^{-3}$\\
Cylinder-20 tuned & PostHoc-Screened-Tuned & $6.44$M & $0.49 \pm 0.00$ & not reported & $(3.164 \pm 0.872)\times 10^{-3}$\\
Cylinder-20 tuned & Proj-Screened-Tuned & $6.44$M & $0.64 \pm 0.00$ & $12.00 \pm 0.36$ & $(1.409 \pm 0.028)\times 10^{-3}$\\
\bottomrule
\end{tabular}
}
\end{table}

\paragraph{Training and Systems Details.}
All CFD backbones use AdamW, cosine decay, mixed precision, and validation-based checkpoint selection on four NVIDIA A100 GPU accelerators. We use PyTorch for training and evaluation. Appendix~\ref{app:hierarchical-support} uses an autoregressive GRU recipe for the hierarchical forecasting support track but keeps the same long-horizon validation protocol. For the long-trajectory cylinder benchmark, we subsample only the training windows with stride 16 while keeping validation and test rollout evaluation at stride 1, so the stronger cylinder recipe remains compute-matched without weakening the evaluation protocol.

\paragraph{Random Initialization Protocol.}
\label{app:seed-protocol}
The main CFD experiment blocks in the main text, including the exact-regime tables, projector diagnostics, screened-family sweep, and frozen U-Net validation, use three random initializations unless noted otherwise in the underlying analysis ledger. The hierarchical forecasting study uses five random initializations on each of the five public hierarchies. The validation-to-deployment protocol on the four hardest approximate-regime rows also uses five random initializations per setting as a robustness extension. Table~\ref{tab:controlledmismatch} uses the common three-run subset shared across all controlled-mismatch cleanup strengths so that the blends are compared on identical underlying checkpoints. The larger five-run deployment extension keeps the selected validation rule unchanged on all four hardest rows.

\section{List of Notation}
\label{app:notation}
\begin{table}[!htbp]
\caption{Notation summary.}
\label{tab:notation}
\centering
\scriptsize
\resizebox{\linewidth}{!}{%
\begin{tabular}{ll}
\toprule
Symbol & Meaning\\
\midrule
$\R,\mathcal{H},\mathcal{M}$ & real field, abstract state space, target manifold\\
$\uu=(u,v)$ & two-dimensional velocity field\\
$\tilde{\uu},\hat{\uu},\yy$ & raw prediction, cleaned prediction, target velocity\\
$x_t,y_t,e_t$ & rollout state, target state, rollout error\\
$t,s,K,T_{\mathrm{in}},T_{\mathrm{eval}}$ & time index, rollout step, model rollout length, input length, evaluation horizon\\
$H$ & spatial grid height\\
$W$ & spatial grid width\\
$b_\theta,\Delta\tilde{\uu},F$ & neural backbone, residual update, abstract predictor\\
$\alpha,\alpha^\star$ & residual or blend weight, optimal blend\\
$\mathcal{T},T,Q,Q^\star$ & rollout repair operator, abstract cleanup, exact family member\\
$\proj,\aproxproj,\widetilde{\Pi}$ & exact periodic projector, boundary-aware cleanup, generic approximate cleanup\\
$\mathcal{P},\mathcal{P}_\lambda$ & selected post hoc cleanup, screened cleanup\\
$\divergence,\divergence_h,\nabla_h,\Delta_h$ & continuous divergence, discrete divergence, gradient, and Laplacian\\
$\kk=(k_x,k_y),\widehat{\cdot},\bar{\uu}$ & Fourier wavevector, Fourier coefficient, spatial mean\\
$\omega,\psi$ & vorticity, stream function\\
$p,P,A,A_\lambda,\lambda$ & pressure, pressure matrix, Poisson operator, screened shift\\
$S_x^\pm,S_y^\pm,S_{H-2},S_{W-2}$ & grid shifts, DST-I matrices\\
$\lambda_i^{(H)},\lambda_j^{(W)},\mu_\lambda$ & Dirichlet eigenvalues, smallest eigenvalue\\
$D,G,r(x)$ & abstract divergence, gradient, linear-system residual\\
$\mathcal{L}_{\mathrm{soft}},\mathcal{L}_{\mathrm{vel}},\mathcal{L}_\omega,\mathcal{L}_{\mathrm{spec}}$ & soft-divergence, velocity, vorticity, spectrum losses\\
$\lambda_{\mathrm{div}},k$ & divergence penalty weight, cleanup iteration budget\\
$d(\tilde{\uu}),\gamma,\tau,q$ & divergence score, adaptive gate, threshold, gate exponent\\
$M_w,d(i,j),w,\odot$ & taper mask, boundary distance, taper width, pointwise product\\
$\lambda_w,\Lambda_w,\lambda_{\mathrm{bdry}},\lambda_{\mathrm{core}}$ & spatial, diagonal, boundary, and core screened shifts\\
$L,L_T,L_F,L_i$ & Lipschitz constants\\
$\delta_t,\beta_{i,t},\beta_i$ & raw one-step error, trajectory, and uniform target distortion\\
$b_t^{(i)},\bar{\delta},\bar{\beta}$ & error-bound sequence, uniform error and distortion bounds\\
$c,x_\alpha,\phi(\alpha)$ & cleanup increment, blended forecast, blend-error curve\\
$I,\|\cdot\|_2,\langle\cdot,\cdot\rangle,(\cdot)^\top$ & identity, Euclidean norm, inner product, transpose\\
$S,z_t,m,b,\mathcal{C}$ & hierarchy summing matrix, bottom series, node counts, coherent manifold\\
$\Pi_{\mathrm{ols}},\Pi_{\mathrm{bu}},\widetilde{\Pi}_{\mathrm{td}},\widetilde{\Pi}_{\mathrm{td},\alpha}$ & OLS, bottom-up, top-down, blended top-down reconciliation\\
$G_{\mathrm{bu}},\bar{p},e_{\mathrm{root}}$ & bottom extractor, historical proportions, root selector\\
$r_{\mathrm{P}},\rho,n$ & Pearson correlation, Spearman correlation, sample count\\
$\mathrm{MSE}@T_{\mathrm{eval}},\mathrm{Div}@T_{\mathrm{eval}},\mathrm{AUC}$ & final-step error, final-step divergence, rollout area metric\\
\bottomrule
\end{tabular}
}
\end{table}

\end{document}